\def\eqref#1{equation~\ref{#1}}
\def\1{\bm{1}}
\newcommand{\test}{\mathcal{D_{\mathrm{test}}}}
\def\vp{{\bm{p}}}
\def\vq{{\bm{q}}}
\DeclareMathAlphabet{\mathsfit}{\encodingdefault}{\sfdefault}{m}{sl}
\SetMathAlphabet{\mathsfit}{bold}{\encodingdefault}{\sfdefault}{bx}{n}
\def\gA{{\mathcal{A}}}
\def\gD{{\mathcal{D}}}
\def\gL{{\mathcal{L}}}
\def\gM{{\mathcal{M}}}
\def\gP{{\mathcal{P}}}
\def\gQ{{\mathcal{Q}}}
\def\gU{{\mathcal{U}}}
\def\gX{{\mathcal{X}}}
\def\gZ{{\mathcal{Z}}}
\def\sB{{\mathbb{B}}}
\def\sI{{\mathbb{I}}}
\def\sR{{\mathbb{R}}}
\newcommand{\E}{\mathbb{E}}
\newcommand{\R}{\mathbb{R}}
\newcommand{\KL}{D_{\mathrm{KL}}}
\DeclareMathOperator*{\argmax}{arg\,max}
\DeclareMathOperator*{\argmin}{arg\,min}
\newtheorem{theorem}{Theorem}
\newtheorem{prop}[theorem]{Proposition}
\newtheorem{lemma}[theorem]{Lemma}
\newtheorem{remark}[theorem]{Remark}
\newtheorem{defi}[theorem]{Definition}
\newtheorem{assumption}[theorem]{Assumption}
\DeclareMathOperator*{\arginf}{arg\,inf}
\DeclareMathOperator*{\argsup}{arg\,sup}
\def\R{{\mathbb{R}}}
\def\Pr{{\mathbb{P}}}
\def\dis{{\rm{d}}}
\def\diam{{\rm{diam}}}
\def\Wass{{\rm{W}}}
\def\KL{{\rm{KL}}}
\def\LP{{\rm{LP}}}
\def\TV{{\rm{TV}}}
\def\test{{\rm{test}}}
\def\clip{{\rm{clip}}}
\title{Provable Robust Overfitting Mitigation in Wasserstein Distributionally Robust Optimization
}
\author{Shuang Liu\textsuperscript{\rm 1, 2},
    Yihan Wang\textsuperscript{\rm 1, 2},
    Yifan Zhu\textsuperscript{\rm 1, 2},
    Yibo Miao\textsuperscript{\rm 1, 2},
    Xiao-Shan Gao\textsuperscript{\rm 1, 2, 3}\thanks{Corresponding author.} \\
\textsuperscript{\rm 1}
 State Key Laboratory of Mathematical Sciences\\
\hskip6pt Academy of Mathematics and Systems Science, Chinese Academy of Sciences\\
\hskip6pt Beijing 100190, China\\
 \textsuperscript{\rm 2}
 University of Chinese Academy of Sciences, Beijing 100049, China\\
  \textsuperscript{\rm 3}
 Kaiyuan International Mathematical Sciences Institute\\
}
\begin{document}

\maketitle

\begin{abstract}
Wasserstein distributionally robust optimization (WDRO) optimizes against worst-case distributional shifts within a specified uncertainty set, leading to enhanced generalization on unseen adversarial examples, compared to standard adversarial training which focuses on pointwise adversarial perturbations.
However, WDRO still suffers fundamentally from the robust overfitting problem, as it does not consider statistical error. We address this gap by proposing a novel robust optimization framework under a new uncertainty set for adversarial noise via Wasserstein distance and statistical error via Kullback-Leibler divergence, called the Statistically Robust WDRO.
We establish a robust generalization bound for the new optimization framework, implying that out-of-distribution adversarial performance is at least as good as the statistically robust training loss with high probability.
Furthermore, we derive conditions under which Stackelberg and Nash equilibria exist between the learner and the adversary, giving an optimal robust model in certain sense.
Finally, through extensive experiments, we demonstrate that our method significantly mitigates robust overfitting and enhances robustness within the framework of WDRO.
\end{abstract}

\section{Introduction}

%

Optimization in machine learning is often challenged by data ambiguity caused by natural noise, adversarial attacks \citep{goodfellow2014adv}, or other distributional shifts \citep{malinin2021shifts1, malinin2022shifts2}.
To develop robust models against data ambiguity, Wasserstein Distributionally Robust Optimization (WDRO) \citep{kuhn2019wasserstein} has emerged as a powerful modeling framework, which has recently attracted significant attention attributed to its connections to generalization and robustness \citep{lee2018minimax,
mohajerin2018wdro_guarantee, an2021generalization}.

Specifically, WDRO optimizes performance under the most adversarial data distribution within a certain Wasserstein distance from the observed empirical distribution $\gD_n$, as shown below.
\begin{equation}
\label{eq: wdro}
    \inf_{\theta\in \Theta}\, \sup_{\gD':\gD'\in \gU(\gD_n)} \E_{z\sim \gD'}[L(\theta,z)].
\end{equation}
Here $\gU(\gD_n)=\{\gD': \Wass_p(\gD_n, \gD')\le \varepsilon\}$ is the ambiguity set consisting of all possible distributions of interest, and $L(\cdot, \cdot)$ is the loss function.
From both intuitive and theoretical perspectives, WDRO can be considered as a more comprehensive framework that subsumes and extends adversarial training \citep{staib2017dro-at, sinha2017WRM, bui_udr, phan2023GLOT-DR}. In contrast to standard adversarial training \citep{madry2017AT}, WDRO operates on a broader scale by perturbing the entire data distribution rather than pointwise adversarial samples. This approach inherently promotes generalization to unseen adversarial examples.
%

However, we find that WDRO still fundamentally suffers from the same robust overfitting phenomenon as standard adversarial training \citep{rice2020robust_overfitting}. This phenomenon observed in our experiments as a degradation in adversarial robustness on test data after a certain point in the training process, which typically occurs shortly after the first learning rate decay (refer to Figure \ref{fig: methods}).

A primary factor contributing to overfitting behavior is the inherent statistical error arising from finite sampling of train data \citep{van2021KL-DRO, bennouna2022holistic}. This error, defined as the discrepancy between the empirical distribution of the training data and the true underlying data distribution, manifests as a gap between empirical and population risk. Consequently, models optimized via traditional empirical risk minimization (ERM) may perform well on training data but generalize poorly to unseen samples.
Although the recent literature \citet{lam2019statistical_guarantees,van2021KL-DRO, bennouna2022holistic} has acknowledged the significance of statistic error in robust optimization frameworks, its incorporation into WDRO remains an unexplored avenue.



To address these issues, we propose a novel ambiguity set obtained by combining Kullback-Leibler divergence and Wasserstein distance, which is called \textbf{Statistically Robust WDRO (SR-WDRO)}. We prove that the SR-WDRO formulation guarantee an upper bound on the adversarial test loss for any desired robustness and any loss function.
Furthermore, we formalize the SR-WDRO as a zero-sum game between the learner (decision maker), who chooses a model parameter $\theta \in \Theta$, and the adversary who chooses a distribution from an ambiguity set, and we derive conditions under which Stackelberg and Nash equilibria exist between the learner and the adversary.

Finally, building upon our theoretical framework, we introduce a novel and practical statistically robust loss training algorithm for neural networks, which incurs negligible additional computational burden compared to standard adversarial training or distributionally robust training.
%
We conduct extensive experiments on benchmark datasets, which show that our proposed approach
demonstrates superior efficacy in mitigating overfitting compared to existing training methodologies, in that our approach maintains performance on in-distribution data while substantially improving robustness to out-of-distribution samples, thus mitigating the critical challenge in generalization.

The theoretical and practical contributions of this paper are summarized as follows.
\begin{itemize}
    \item We propose SR-WDRO based on our
    novel ambiguity set to mitigate robust overfitting in WDRO.
    \item  Theoretically, we prove that the adversarial test loss can be upper bounded by the statistically robust training loss with high probability,
    which can be considered as generalization bound for SR-WDRO.
    We also establish the conditions necessary for the existence of Stackelberg and Nash equilibria between the learner and the adversary.
    \item We introduce a practically robust training algorithm based on SR-WDRO which maintains computational efficiency.
    \item We conduct extensive evaluations on benchmark datasets and our results show that the SR-WDRO approach effectively mitigates robust overfitting and outperforms other existing robust methods in terms of adversarial robustness.
\end{itemize}

\section{Related Works}
\textbf{Robust Overfitting.}
The phenomenon of robust overfitting has been a focal point in the field of adversarial machine learning. The seminal work by \citet{rice2020robust_overfitting} brought this issue to the forefront, demonstrating that after a certain point in standard adversarial training, i.e., shortly after the first learning rate decay, the robust performance on test data will continue to degrade with further training.
Notably, conventional overfitting remedies such as explicit regularization and data augmentation do not improve performance beyond early stopping. \citet{schmidt2018more_data} attributed robust overfitting to sample complexity theory and suggested that more training data are required for adversarial robust generalization.
This assertion is supported by empirical results in subsequent studies \citep{schmidt2018more_data, zhai2019more_unlabel}. Recent works also proposed various strategies to mitigate robust overfitting without relying on additional training data by sample re-weighting \citep{zhang2020GAIRAT, wang2019MART},
adversarial weight perturbation \citep{wu2020AWP},
weight smoothing \citep{chen2020learned_smooth},  favoring large loss data \citep{yu2022MLCAT},
and considering the memorization effect  \citep{dong2022memorization_AT, wang2024balance,2024-iclr-lyu}.
In our work, we will introduce the statistically robustness into WDRO to mitigate overfitting theoretically and practically.


\textbf{WDRO.}
WDRO has recently received significant attention in the context of adversarial training \citep{staib2017dro-at, wong19a_wdro_adv, sinha2017WRM, bui_udr, phan2023GLOT-DR}. Unlike standard adversarial training \citep{madry2017AT}, which defends against attacks by bounding the perturbation to each individual data point, WDRO provides protection by imposing a bound on the average perturbation constrained by Wasserstein distance applied on the entire data distribution. This distinction positions WDRO as a more holistic approach to handling adversarial perturbations and generalizes better than adversarial training on unseen data samples \citep{bui_udr}. However, despite this advantage, WDRO remains vulnerable to the same robust overfitting phenomenon observed in standard adversarial training, primarily due to its lack of consideration for statistical error.
Recently, \citet{bennouna2022holistic, bennouna2023certified_LP} proposed a DRO formulation using an ambiguity set combining Kullback-Leibler divergence and Levy-Prokhorov metric in an attempt to protect against corruption and statistical errors.
In the absence of data poisoning, their framework effectively reduces to improving standard adversarial training with statistical robustness.
However, our approach diverges significantly. We leverage the inherent advantages of WDRO as our foundation and then augment it with statistical robustness to mitigate robust overfitting, which enables our framework to achieve superior performance against adversarial perturbations.

\section{Preliminaries}
\textbf{Notations.}
Let $\gZ \subset \R^m$ be a compact nonempty set equipped with metric $\dis(\cdot \ , \ \cdot)$ and $\diam(\gZ)= \sup\{\dis(z,z'): z, z' \in \gZ\}$ the diameter of $\gZ$ which is finite.
The class of Borel probability measures on $\gZ$ is denoted by $\gP(\gZ)$.
For simplicity, we denote $\E_\mu [f(z)]$ as the expectation of $f(z)$ with $z\sim \mu$.

\begin{defi}[Wasserstein metric]
    For $p\ge 1$, the $p$-th Wasserstein metric between $\mu, \nu \in \gP(\gZ)$ is
    \begin{equation*}
        \Wass_p(\mu, \nu) := \inf_{\pi \in \Pi(\mu, \nu)} \left\{\E_{(z, z')\sim\pi} \left[\dis^p(z, z')\right] \right\}^{\frac{1}{p}}
    \end{equation*}
    where the infimum is taken over all coupling of $\mu$ and $\nu$, i.e. probability measure $\pi$ on the product space $\gZ \times \gZ$ with given marginals $\mu$ and $\nu$.
\end{defi}

\begin{defi}[KL divergence]
    Kullback-Leibler (KL) divergence between $\mu, \nu \in \gP(\gZ)$ is
    \begin{equation*}
        \KL (\mu, \nu) := \int_\gZ p(z)\log\frac{p(z)}{q(z)} dz
    \end{equation*}
    where $p(z), q(z)$ are probability density functions of $\mu, \nu$, respectively.
\end{defi}

Wasserstein metric is a function defined between two probability distributions, which represents the cost of an optimal mass transportation plan. Kullback-Leibler divergence measures the difference between two probability distributions, quantifying how one distribution diverges from a reference distribution.
We additionally consider the Levy-Prokhorov (LP) metric and the total variation metric (TV). The Levy-Prokhorov metric is theoretically important because weak convergence of probability distribution is equivalent to the convergence in the Levy-Prokhorov metric. In Appendix \ref{app: relations}, we establish the relationship among these probability discrepancies.

 \section{statistically robust WDRO}
In Section \ref{sec-srwdro}, we first present the definition and game-theoretic description of SR-WDRO. Subsequently, in Section \ref{sec-Certified_generalization}, we demonstrate that models trained with statistically robust loss
achieve out-of-distribution generalization.
Finally, in Section \ref{sec-game} we examine the existence of Stackelberg and Nash equilibria under some assumptions.

\begin{figure}[t]
\centering
\includegraphics[width=0.8\columnwidth]{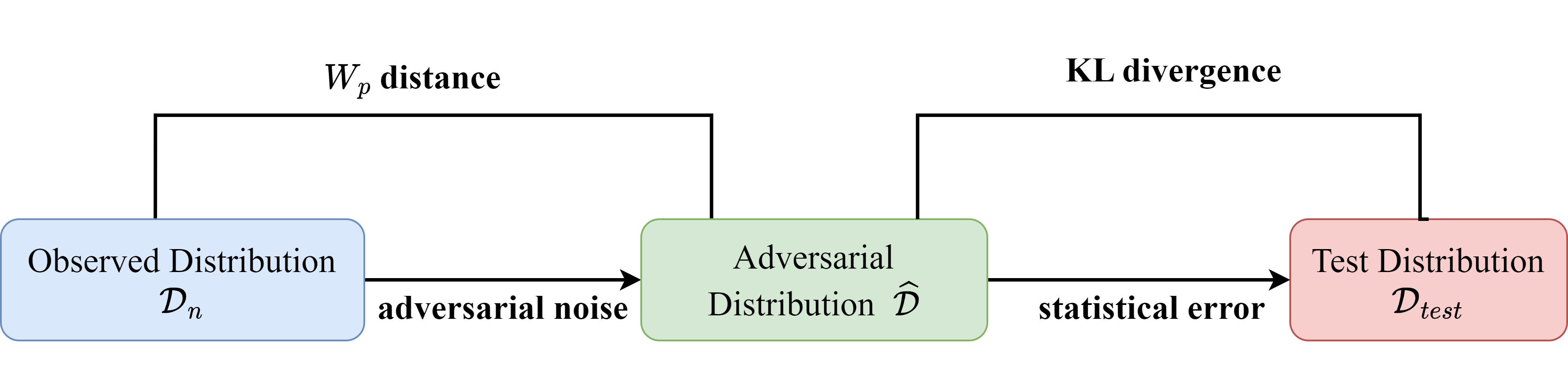}
\vspace{1em}
\caption{Illustration of our SR-WDRO. The adversarial perturbations are quantified using  Wasserstein distance between $\gD_n$ and $\widehat{\gD}$. The adversarial distribution is then compared with the test distribution $\gD_{\test}$ using Kullback-Leibler divergence, which accounts for the statistical error.
}
\label{fig: illustration}
\end{figure}

\subsection{statistically robust WDRO and game description}
\label{sec-srwdro}

We perform stochastic optimization with respect to an unknown data distribution $\gD$, given access only to an empirical distribution $\gD_n$, where $n$ is the observed finite sample size. The goal of distributionally robust optimization is to learn a robust model from  $\gD_n$ that will perform well on unseen test distribution $\gD_{\test}$, which may either be the true data distribution $\gD$ or more likely distributions shifted from $\gD$. Our work focuses primarily on distribution shifts caused by adversarial attacks.

The WDRO constructs an ambiguity set based on the empirical distribution $\gD_n$, and optimizes performance under the most adversarial distribution within a certain Wasserstein distance from $\gD_n$
as shown in Eq (\ref{eq: wdro}).
However, previous WDRO methods still suffer fundamentally from the robust overfitting phenomenon. To mitigate this issue, we incorporate the Kullback-Leibler (KL) divergence in WDRO, specifically aiming to reduce statistical error caused by training on finite samples, a brief illustration is shown in Figure \ref{fig: illustration}.

Our SR-WDRO can be interpreted as a novel combination of  Kullback-Leibler divergence and Wasserstein distributional robust optimizations. We consider the ambiguity set:
\begin{equation}
\label{eq-und}
    \gU(\gD_n) := \{\gD'\in \gP(\gZ) : \exists \gD'' \in \gP(\gZ)\ \text{s.t.} \ \Wass_p(\gD_n, \gD'') \le \varepsilon, \ \KL(\gD'', \gD')\le \gamma\}
\end{equation}
%
%
%
where the desired protection against adversarial noise is controlled by the {\em adversarial budget} $\varepsilon$ and statistical error is accounted by the {\em statistical budget} $\gamma (\gamma > 0)$.
%

Now we can formulate this robust optimization problem as a {\em zero-sum game} between the learner who chooses a decision model $\theta \in \Theta$ and an adversary who chooses a distribution $\gD'$ from the ambiguity set $\gU(\gD_n)$.
Let $L(\theta, z)$ be the loss function (e.g. cross-entropy loss). Our game model is as follows:
\begin{itemize}
    \item The adversary chooses a distribution  $\gD' \in \gU(\gD_n)$ to maximize the loss $\E_{\gD'}[L(\theta, z)]$.
    \item The learner selects a decision model $\theta \in \Theta$, aiming to minimize the loss $\E_{\gD'}[L(\theta, z)]$.
\end{itemize}

We introduce the {\em statistically robust WDRO (SR-WDRO) problem}:
\begin{equation}
\label{eq: SR-dro}
    \inf_{\theta\in \Theta}\, \sup_{\gD' \in \gU(\gD_n)} \E_{\gD'}[L(\theta, z)]
\end{equation}
and denote the {\em statistically robust loss} as:
\begin{equation}
\label{eq: sup loss}
    \gL_{\varepsilon
    , \gamma}(\theta, \gD_n) := \sup_{\gD' \in \gU(\gD_n)} \E_{\gD'}[L(\theta, z)].
\end{equation}

\subsection{Certified generalization}
\label{sec-Certified_generalization}
In this subsection, we rigorously establish that the model trained with statistically robust loss $\gL_{\varepsilon, r}(\theta, \gD_n)$ enjoys the asymptotic out of distribution generalization guarantee and safeguards against the statistical error and adversarial noises simultaneously.
Let $\sB(z;\epsilon) := \{z' \in \gZ \,|\, \dis(z, z') \le \varepsilon\}$ be the $\epsilon$-ball around $z$ under metric $\dis$.
Proofs are provided in Appendix \ref{app: main thm}.

\begin{theorem}[Generalization certificate]
    \label{thm: gen}
     Let $\gD$ be the true data distribution, and $\gD_n$ be the observed empirical distribution sampled i.i.d. from $\gD$. Then for all $\varepsilon >0$, let $\delta = (\frac{\varepsilon}{\diam(\gZ)+1})^p$, we have
    \begin{equation}
    \label{eq-gb1}
        \Pr\Big(\forall \theta\in \Theta,\E_{\gD}[L(\theta, z)] \le \gL_{\varepsilon, \gamma}(\theta, \gD_n)\Big) \ge 1 - e^{-\gamma n}\left(\frac{4}{\delta}\right)^{m(\gZ, \delta)}
    \end{equation}
    where $m(\gZ, \delta):= \min\{k\ge 0: \exists \xi_1, \cdots, \xi_k \in \gZ, \ \text{s.t.}\ \cup_{i=1}^k \sB(\xi_i, \delta)\supseteq \gZ\}$ is the {\em internal covering number} of the support set $\gZ$.
\end{theorem}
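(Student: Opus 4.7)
The plan is to prove the stronger statement that the true distribution $\gD$ itself lies in the ambiguity set $\gU(\gD_n)$ with probability at least $1 - e^{-\gamma n}(4/\delta)^{m(\gZ,\delta)}$; on that event the inequality $\E_{\gD}[L(\theta,z)] \le \sup_{\gD' \in \gU(\gD_n)} \E_{\gD'}[L(\theta,z)] = \gL_{\varepsilon,\gamma}(\theta,\gD_n)$ is immediate and, crucially, holds uniformly in $\theta$ because the ambiguity set does not depend on $\theta$. No VC/Rademacher-type control over $\Theta$ is then needed.

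To certify $\gD \in \gU(\gD_n)$ I need to exhibit a bridging measure $\gD''$ with $\Wass_p(\gD_n,\gD'') \le \varepsilon$ and $\KL(\gD''\|\gD) \le \gamma$. I would construct $\gD''$ by a spatial discretization: fix a minimal $\delta$-cover $\{\xi_1,\ldots,\xi_N\}$ of $\gZ$ with $N = m(\gZ,\delta)$ and take the associated Voronoi partition $\{V_i\}_{i=1}^N$, so that each $V_i \subseteq \sB(\xi_i,\delta)$ has diameter at most $2\delta$. With $p_i = \gD(V_i)$ and $\hat p_i = \gD_n(V_i)$, define $\gD''$ by the piecewise-constant Radon--Nikodym derivative $d\gD''/d\gD = \hat p_i / p_i$ on $V_i$ (with the convention $0/0 = 0$ on null cells). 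By construction $\gD'' \ll \gD$, $\gD''(V_i) = \hat p_i$ for every $i$, and the conditional distribution $\gD''|_{V_i}$ coincides with $\gD|_{V_i}$.

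This construction reduces both defining conditions of $\gU(\gD_n)$ to a single scalar concentration. For the Wasserstein side, because $\gD_n$ and $\gD''$ assign the same mass $\hat p_i$ to each $V_i$, a cell-by-cell coupling pays at most $\mathrm{diam}(V_i) \le 2\delta$ per unit of transported mass, giving $\Wass_p(\gD_n,\gD'') \le 2\delta$ deterministically; the tailored choice $\delta = (\varepsilon/(\diam(\gZ)+1))^p$ keeps this at most $\varepsilon$. For the KL side, an explicit integration using the piecewise-constant density collapses the functional to its discrete cell analogue, $\KL(\gD''\|\gD) = \sum_i \hat p_i \log(\hat p_i / p_i) = \KL(\hat p_n \| p)$, where $\hat p_n$ and $p$ are the marginals of the $N$-ary categorical variable induced by the partition. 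So the entire probabilistic content of the theorem reduces to whether $\KL(\hat p_n \| p) \le \gamma$.

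The final and hardest step is then a deviation bound of the form $\Pr(\KL(\hat p_n \| p) \ge \gamma) \le (4/\delta)^N e^{-n\gamma}$ for the empirical distribution on $N$ atoms. The textbook method-of-types bound gives only $(n+1)^{N-1} e^{-n\gamma}$, which is polynomial in $n$ and so of the wrong shape. To obtain the stated $n$-independent prefactor $(4/\delta)^N$, I would replace type counting by a covering argument on the probability simplex $\Delta^N$: build a total-variation $\delta/4$-net of $\Delta^N$ of cardinality at most $(4/\delta)^N$, apply a Chernoff--Cram\'er exponential bound at each net point, and translate total-variation closeness into KL closeness via Pinsker's inequality (and its quadratic reverse on the net scale). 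Verifying that this composition yields exactly the constants $(4/\delta)^N$ and $e^{-n\gamma}$, and confirming that the $\diam(\gZ)+1$ normalization of $\delta$ cleanly absorbs the factor of two in the Wasserstein estimate, is the principal technical obstacle; the bridging construction above is routine by comparison.
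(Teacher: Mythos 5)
Your strategy and the paper's start from the same observation: it suffices to show that $\gD$ itself lies in $\gU(\gD_n)$ with the stated probability, after which the loss inequality is immediate and automatically uniform in $\theta$. But the two routes then diverge, and your route has a genuine gap.

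The paper does \emph{not} discretize $\gZ$. Instead it applies a finite-$n$ Sanov bound directly on $\gP(\gZ)$ equipped with the Levy--Prokhorov metric (citing Dembo): the "bad" set $\gA$ of empirical measures that fail to admit a witness is covered by $m_{\LP}(\gA,\delta)\le(4/\delta)^{m(\gZ,\delta)}$ LP-balls, and the exponent in the Chernoff bound is $\inf_{\gD'\in\gA^{\delta}}\KL(\gD'\|\gD)$ taken over the $\delta$-inflation. The entire $\delta$-budget is spent on this inflation, and the choice $\delta=(\varepsilon/(\diam(\gZ)+1))^{p}$ is tuned precisely so that Proposition~\ref{prop: metric} converts $\LP\le\delta$ into $\Wass_p\le\varepsilon$, which forces $\gA^{\delta}\subseteq\{\gD':\KL(\gD'\|\gD)>\gamma\}$ and makes the infimum exactly $\gamma$. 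In your proposal, you spend the whole $\delta$-budget on the Voronoi discretization (cell diameter $\le 2\delta$) and then still need a multinomial bound $\Pr(\KL(\hat p_n\|p)\ge\gamma)\le(4/\delta)^{N}e^{-n\gamma}$ with \emph{no inflation slack left} in the exponent. That is a materially harder bound than the inflated Sanov bound the paper invokes: the standard method-of-types bound carries a polynomial $(n+1)^{N}$ prefactor, and your proposed replacement — a TV-net on the simplex plus Pinsker — does not obviously close the gap, because Pinsker goes the wrong direction (TV $\lesssim\sqrt{\KL}$) and the reverse Pinsker inequality you would need fails without a uniform lower bound on $p_i$, which you cannot assume for arbitrary cell masses. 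You flag this yourself as the "principal technical obstacle," and I agree it is unresolved.

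There is also a secondary issue with the Wasserstein step. Your Voronoi cells give $\Wass_p(\gD_n,\gD'')\le 2\delta=2(\varepsilon/(\diam(\gZ)+1))^{p}$, and the normalization does \emph{not} automatically absorb the factor of $2$: at $p=1$ this requires $\diam(\gZ)\ge 1$. The paper's choice of $\delta$ is calibrated for the $\Wass_p\leftrightarrow\LP$ conversion constant $(\diam(\gZ)+1)$ in Proposition~\ref{prop: metric}, not for Voronoi cell diameters, so the identical $\delta$ will not in general serve both roles. If you want to salvage the discretization route, you would have to split $\delta$ into a Voronoi part and an inflation part, but then the inflation acts on the simplex (in TV) while the cover size $m(\gZ,\delta)$ lives on $\gZ$, and matching the two budgets to recover exactly $(4/\delta)^{m(\gZ,\delta)}e^{-n\gamma}$ is nontrivial and not sketched. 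As it stands, the proposal does not constitute a proof; the paper's black-box invocation of the Dembo covering bound plus the $\LP$--$\Wass_p$ comparison is both shorter and actually closes.
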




In the proof of Theorem \ref{thm: gen}, we in fact prove that the true distribution $\gD$ can be in the ambiguity set $\gU(\gD_n)$ with high probability. Compared to the results in \citep{bennouna2022holistic}, the bound we derived elucidates the dependency on parameters $\varepsilon$ and $\gamma$, and the internal covering number is directly related to $\varepsilon$. Unlike classical statistical learning, our bounds do not depend on the dimension of the parameter space $\Theta$ but rather on the internal covering number of sample space $\gZ$.
The result is uniform in that the probability approaches 1 for any $\theta$ when the sample size $n$ is sufficiently large.

\begin{remark}
To ensure that the bound in \eqref{eq-gb1} is meaningful, the condition $\gamma > m(\mathcal{Z}, \delta) \cdot \log(4/\delta)/n$ must hold. Although this condition may seem strong due to the exponential dependency on the dimension of the sample space $\mathcal{Z}$, the cover number is not excessively large in practice because the intrinsic dimensionality of the sample space is often much lower than its ambient dimension. For example, the intrinsic dimensionality of MNIST is around 13, CIFAR-10 approximately 26, and ImageNet between 26 and 43 \citep{facco2017intrinsicc_mnist,popeintrinsic}. Our theory can leverage this intrinsic dimensionality instead of the ambient dimension, making the condition more practical.
\end{remark}




We extend our analysis to consider adversarial robustness, wherein the test distribution $\gD$ is subject to adversarial perturbations. We formalize this scenario by introducing an adversarial transformation:
\[\gM:z\to \argmax_{z'\in \sB(z;\epsilon)} L(\theta, z').\]
This transformation maps each test input to its worst-case adversarial example within the $\epsilon$-neighborhood.
Let $\gM_\#\gD$ denote the pushforward measure of $\gD$ under $\gM$, representing the distribution of adversarial examples generated from $\gD$. And the expected loss under this adversarial distribution can be expressed as: $\E_{\gM_\#\gD}L(\theta,z) = \E_{\gD}[\max_{z'\in\sB(z;\epsilon)}L(\theta,z')]$. Our main result establishes a high-probability bound on the test adversarial loss:

\begin{theorem}[Robustness certificate]
    \label{thm: robusness}
    Let $\gD$ be the true data distribution, $\gD_n$ the empirical distribution sampled i.i.d. from $\gD$,
    and $\delta = (\frac{\sigma}{\diam(\gZ)+1})^p$ for any $\sigma>0$. Then
    \begin{equation}
        \Pr\Big(\forall \theta\in \Theta,\E_{\gD}[\max_{z'\in\sB(z;\epsilon)}L(\theta,z')] \le \gL_{\varepsilon+\sigma, \gamma}(\theta, \gD_n)\Big) \ge 1 - e^{-n\gamma}\left(\frac{4}{\delta}\right)^{m(\gZ, \delta)}
    \end{equation}
    where $m(\gZ, \delta):= \min\{k\ge 0: \exists \xi_1, \cdots, \xi_k \in \gZ, \ \text{s.t.}\ \cup_{i=1}^k \sB(\xi_i, \delta)\supseteq \gZ\}$ denote the internal covering number of the support set $\gZ$.
\end{theorem}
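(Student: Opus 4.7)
The plan is to reduce the robustness certificate to the generalization certificate of Theorem~\ref{thm: gen} by replacing $\gD$ with its adversarial pushforward $\gM_\#\gD$ and showing that $\gM_\#\gD$ still lies in the enlarged ambiguity set $\gU_{\varepsilon+\sigma,\gamma}(\gD_n)$ with the stated probability. Observe first the identity
\begin{equation*}
\E_{\gD}\!\bigl[\max_{z'\in \sB(z;\epsilon)} L(\theta,z')\bigr] \;=\; \E_{\gD}[L(\theta,\gM(z))] \;=\; \E_{\gM_\#\gD}[L(\theta,z)],
\end{equation*}
so if we can place $\gM_\#\gD$ in the ambiguity set associated with $(\varepsilon+\sigma,\gamma)$, the supremum defining $\gL_{\varepsilon+\sigma,\gamma}(\theta,\gD_n)$ dominates the adversarial test loss uniformly in $\theta$.

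Next I would invoke the key high-probability statement underlying Theorem~\ref{thm: gen}: with the choice $\delta=(\sigma/(\diam(\gZ)+1))^p$, with probability at least $1-e^{-n\gamma}(4/\delta)^{m(\gZ,\delta)}$ there exists an intermediate distribution $\gD''\in\gP(\gZ)$ such that $\Wass_p(\gD_n,\gD'')\le \sigma$ and $\KL(\gD'',\gD)\le \gamma$. This is exactly the ``$\gD\in\gU_{\sigma,\gamma}(\gD_n)$ with high probability'' step that is already carried out in the proof of Theorem~\ref{thm: gen}; I would cite that construction rather than reprove it.

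The core of the argument is to push this intermediate witness through $\gM$. Using the coupling $(z,\gM(z))$ with $z\sim \gD''$ we have $\Wass_p(\gD'',\gM_\#\gD'')\le \epsilon$, hence by the triangle inequality for $\Wass_p$,
\begin{equation*}
\Wass_p(\gD_n,\gM_\#\gD'') \;\le\; \Wass_p(\gD_n,\gD'') + \Wass_p(\gD'',\gM_\#\gD'') \;\le\; \sigma+\varepsilon.
\end{equation*}
For the KL component, the data processing inequality applied to the measurable map $\gM$ gives $\KL(\gM_\#\gD'',\gM_\#\gD)\le \KL(\gD'',\gD)\le \gamma$. Taking $\gM_\#\gD''$ as the intermediate witness in the definition \eqref{eq-und} shows $\gM_\#\gD\in \gU_{\varepsilon+\sigma,\gamma}(\gD_n)$ on the same high-probability event, so
\begin{equation*}
\E_{\gM_\#\gD}[L(\theta,z)] \;\le\; \sup_{\gD'\in \gU_{\varepsilon+\sigma,\gamma}(\gD_n)} \E_{\gD'}[L(\theta,z)] \;=\; \gL_{\varepsilon+\sigma,\gamma}(\theta,\gD_n),
\end{equation*}
uniformly in $\theta\in\Theta$, which combined with the identity above delivers the claim.

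The main delicate point, and the place I would be most careful, is the measurability of the adversarial map $\gM$ needed to define the pushforward and to invoke the data processing inequality for KL. Since $\argmax_{z'\in \sB(z;\epsilon)}L(\theta,z')$ need not be single-valued, one should either (i) replace the $\max$ by a supremum and select a measurable $\epsilon$-almost maximizer via a standard measurable selection theorem, or (ii) note that only the expectation $\E_{\gD}[\sup_{z'\in\sB(z;\epsilon)}L(\theta,z')]$ matters, and approximate it by measurable selectors; in both cases the triangle inequality and data processing inequality arguments go through verbatim. The probability bound then inherits exactly the form $1-e^{-n\gamma}(4/\delta)^{m(\gZ,\delta)}$ from Theorem~\ref{thm: gen} with $\delta=(\sigma/(\diam(\gZ)+1))^p$, as claimed.
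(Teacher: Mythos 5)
Your proof is correct, and it takes a genuinely different and arguably cleaner route than the paper's. The paper establishes the result by (i) introducing an auxiliary ambiguity set $\gU'_{\sigma,\gamma,\varepsilon}(\gD_n)$ built from a three-stage chain $\Wass_p$–$\KL$–$\LP_\varepsilon$, (ii) showing $\gM_\#\gD$ lies in that set with high probability, and then (iii) bounding the resulting supremum by $\gL_{\varepsilon+\sigma,\gamma}(\theta,\gD_n)$ via a chain of inequalities that invokes Lemma~C.3 of \citet{bennouna2023certified_LP} to commute the $\LP_\varepsilon$ and $\KL$ constraints, the implication $\LP_\varepsilon\le 0\Rightarrow\Wass_p\le\varepsilon$, and the triangle inequality. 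You instead stay entirely within the $\gU_{\varepsilon+\sigma,\gamma}$ framework: you take the intermediate witness $\gD''$ furnished by Lemma~\ref{lemma-ldp} (applied at radius $\sigma$), push it through $\gM$, control the Wasserstein side by the trivial coupling $(z,\gM(z))$ plus the triangle inequality, and control the KL side by the data processing inequality $\KL(\gM_\#\gD'',\gM_\#\gD)\le\KL(\gD'',\gD)\le\gamma$. This replaces the LP-metric machinery and the external swapping lemma with the classical DPI, commuting the pushforward with the KL constraint at the level of measures rather than commuting constraints at the level of suprema. The bound inherited from Lemma~\ref{lemma-ldp} is identical in both cases. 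Your point about measurable selection for $\gM$ is a real and well-placed caveat that the paper glosses over (Assumption~\ref{ass: loss}(i) and compactness of $\sB(z;\epsilon)\subset\gZ$ guarantee the argmax is nonempty, but a measurable selector is still needed to define the pushforward); the paper implicitly assumes this, so it is not a gap specific to your approach.
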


From Theorem \ref{thm: robusness}, we observe that ensuring $\varepsilon$-robustness on the test set typically needs a larger attack budget $\varepsilon+\sigma$  during training.
However, according to Theorem \ref{thm: gen}, an excessively large training budget may impede generalization for benign data. These findings align with previous experimental results on adversarial training \citep{tsipras2018robustness, andriushchenko2020understanding}.


\begin{remark}[Comparison with Standard WDRO]
From a theoretical perspective, our framework differs from standard WDRO in both the assumptions required and the form of the generalization bounds. Specifically, the generalization bounds in \citep{azizian2024exact} rely on additional assumptions about the loss function, such as Lipschitz continuity and boundedness, which are not required in our framework.
Furthermore, the form of the generalization bounds in our framework is fundamentally different from those of WDRO. Our bound provides a stronger guarantee, where the probability of failure (i.e., the complement of the confidence level) decays exponentially with the sample size $n$. Importantly, this rate of decay is directly influenced by $\gamma$.
%
\end{remark}

\begin{remark}
The result is also applicable to general out-of-distribution shifts such as domain generalization. Details are given in Appendix \ref{App-3}.
\end{remark}

\subsection{Nash Equilibrium and Stackelberg Equilibrium}
\label{sec-game}
As mentioned in Section \ref{sec-srwdro},
the SR-WDRO problem (\ref{eq: SR-dro}) constitutes a zero-sum game.
In this subsection, we show that Stackelberg equilibrium exists under natural assumptions and Nash equilibrium exists under more assumptions. The detailed definitions of the game equilibrium, along with the proofs for this section, are provided in the Appendix \ref{app: nash}.



We need the following assumptions on the loss function.
\begin{assumption}[Loss function]
\label{ass: loss}
\
\begin{enumerate}[i)]
    \item For any $\theta \in \Theta$, the loss function $L(\theta, z)$ is non-negative and upper semi-continuous in $z$.
    \item For any $z \in \gZ $, the loss function $L(\theta, z)$ is lower semi-continuous in $\theta$.
     \item For any $\theta \in \Theta$, there exist $c>0, \, \widehat{z}_0\in \gZ$ and $k \in (0, p)$ such that $L(\theta, z) \le c[1+\dis^k(z, \widehat{z}_0)]$ for all $z\in \gZ$.
\end{enumerate}
\end{assumption}

Assumptions (i) and (ii) regarding the continuity of the loss function are common and are satisfied by neural network models, and assumption (iii) is necessary for the existence of the maximum in (\ref{eq: sup loss}).
The following proposition establishes that the inner maximization problem is solvable, which is the key ingredient required to prove the existence of Stackelberg and Nash equilibrium.
\begin{prop}
\label{prop: compact}
    The ambiguity set ${\gU}(\gD_n) := \{\gD'\in \gP(\gZ) : \exists \gD'' \in \gP(\gZ)\ \text{s.t.} \ \Wass_p(\gD_n, \gD'') \le \varepsilon, \ \KL(\gD'', \gD')\le \gamma\}$ is compact in $\gP(\gZ)$ with respect to weak topology. If Assumption \ref{ass: loss}  is satisfied, then the supremum in (\ref{eq: sup loss}) is finite and can be attained for any $\theta\in \Theta$.
\end{prop}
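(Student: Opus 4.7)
The plan is to establish compactness of $\gU(\gD_n)$ by viewing it as the image of a closed (hence compact) subset of $\gP(\gZ)\times \gP(\gZ)$ under a continuous projection, and then to deduce attainment of the supremum from upper semi-continuity on a compact set. Since $\gZ$ is compact, Prokhorov's theorem gives that $\gP(\gZ)$ is itself compact with respect to the weak topology, so every weakly closed subset of $\gP(\gZ)$ is automatically compact. Thus it suffices to prove that $\gU(\gD_n)$ is weakly closed.

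For the closedness, I would introduce the auxiliary set
\begin{equation*}
A := \{(\gD'',\gD') \in \gP(\gZ)\times \gP(\gZ) :\, \Wass_p(\gD_n,\gD'') \le \varepsilon,\ \KL(\gD'',\gD') \le \gamma\},
\end{equation*}
and show that $A$ is closed in the product weak topology. The first constraint defines a closed set because, on the compact space $\gZ$, the map $\nu \mapsto \Wass_p(\gD_n,\nu)$ is lower semi-continuous with respect to weak convergence (in fact $\Wass_p$ metrizes weak convergence on $\gP(\gZ)$ when $\gZ$ is compact with bounded metric). The second constraint defines a closed set because $\KL$ is jointly lower semi-continuous in both arguments with respect to weak convergence, a classical result that I will invoke. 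Hence $A$ is closed in the compact space $\gP(\gZ)\times \gP(\gZ)$, so $A$ is compact, and $\gU(\gD_n)=\pi_2(A)$ is the continuous image of a compact set, therefore compact (and closed).

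For the second assertion, I would first observe that Assumption \ref{ass: loss}(iii) combined with compactness of $\gZ$ yields a uniform bound $L(\theta,z) \le c[1+\diam(\gZ)^k] =: M_\theta < \infty$ for all $z\in \gZ$. Together with upper semi-continuity of $z\mapsto L(\theta,z)$ from Assumption \ref{ass: loss}(i), the Portmanteau theorem implies that $\gD' \mapsto \E_{\gD'}[L(\theta,z)]$ is upper semi-continuous with respect to weak convergence on $\gP(\gZ)$, and trivially bounded above by $M_\theta$. Consequently the supremum in \eqref{eq: sup loss} is finite, and an upper semi-continuous real-valued function on the non-empty weakly compact set $\gU(\gD_n)$ (which contains, e.g., $\gD_n$ itself if we take $\gD''=\gD'=\gD_n$) attains its maximum.

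The step I expect to be the main technical point is invoking the joint weak lower semi-continuity of $\KL(\cdot,\cdot)$; this is a standard but non-obvious fact (dual Donsker--Varadhan representation, or Posner's theorem), and once it is available, everything else is a routine topological argument. A secondary subtlety is that the ambiguity set is defined via an existential quantifier on $\gD''$, which is exactly why passing through the auxiliary set $A$ and projecting is the cleanest route, rather than trying to argue closedness of $\gU(\gD_n)$ directly.
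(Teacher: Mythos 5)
Your proof is correct, but it takes a genuinely cleaner route than the paper's in all three parts. For compactness, the paper argues sequentially: it takes a weakly convergent sequence $\gD^i\to\gD^0$ in $\gU(\gD_n)$, picks the corresponding witnesses $\widehat{\gD^i}$, extracts a convergent subsequence using compactness of the Wasserstein ball, and passes to the limit via lower semi-continuity of $\KL$; then, to upgrade closedness to compactness, the paper shows $\gU(\gD_n)$ is contained in an enlarged Wasserstein ball (using Proposition \ref{prop: wp-KL}). Your projection argument via the auxiliary set $A\subset\gP(\gZ)\times\gP(\gZ)$ packages the same ideas (compactness of the Wasserstein constraint set, joint lower semi-continuity of $\KL$) into a single topological step, and your observation that $\gP(\gZ)$ is already compact by Prokhorov (since $\gZ$ is compact) makes the Wasserstein-ball containment unnecessary for compactness. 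For finiteness, the paper invokes \citet[Theorem 2]{yue2022linear-wass} through the ball-containment; you instead note that Assumption \ref{ass: loss}(iii) together with $\diam(\gZ)<\infty$ forces $L(\theta,\cdot)$ to be uniformly bounded, which makes finiteness immediate and frankly shows that the cited machinery is overkill in the compact-$\gZ$ setting. For attainment, the paper again cites \citet[Theorem 3]{yue2022linear-wass} for weak upper semi-continuity of the expected loss, whereas you derive it directly from the Portmanteau theorem applied to the bounded upper semi-continuous function $L(\theta,\cdot)$. Both proofs hinge on the same deep fact, joint weak lower semi-continuity of $\KL$, which you correctly identify as the one genuinely non-trivial ingredient; otherwise your version is more elementary and self-contained, while the paper's version buys a containment $\gU(\gD_n)\subset\sB_{W_p}(\gD_n,\varepsilon+\diam(\gZ)(\gamma/2)^{1/(2p)})$ that gives some structural intuition but is not strictly needed once $\gZ$ is assumed compact.
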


\textbf{Stackelberg game:} We also consider a threat model in which the adversary has full knowledge of the learner's actions before determining its own strategy. We can model our SR-WDRO problem (\ref{eq: SR-dro})  as a zero-sum Stackelberg game between the leader (decision-maker) who chooses a decision model $\theta \in \Theta$ at first, and the follower (adversary) who chooses a distribution $\gD'$ from the ambiguity set $\gU(\gD_n)$ after observing the leader's action. As a corollary of Proposition \ref{prop: compact}, we can give the existence of Stackelberg equilibrium.

\def\BR{\rm{BR}}
\begin{theorem}[Stackelberg Equilibrium]
\label{thm-st}
    If Assumption \ref{ass: loss} holds and  $\Theta$ is compact, then the game has a Stackelberg equilibrium. i.e. there exists a $(\theta^*, \gD'^*(\theta^*))$ such that
    $$\theta^* \in \argmin_{\theta\in \Theta} \max_{\gD' \in \BR(\gD_n)} \E_{\gD'}[L(\theta, z)], \quad \gD'^*(\theta^*) \in \BR(\gD_n)= \argmax_{\gD' \in \gU(\gD_n)} \E_{\gD'}[L(\theta^*, z)].$$
\end{theorem}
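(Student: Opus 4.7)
The plan is to reduce the existence of a Stackelberg equilibrium to a straightforward application of the Weierstrass theorem on the compact set $\Theta$, once the inner-maximum value function is shown to be lower semi-continuous in $\theta$. The entire argument rests on two ingredients already available: Proposition \ref{prop: compact}, which guarantees that the inner supremum is finite and attained for every $\theta$, and Assumption \ref{ass: loss}, which provides the lower semi-continuity and growth control needed to pass from pointwise properties of $L$ to properties of the value function.

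First, I would define the value function
\begin{equation*}
V(\theta) \;:=\; \gL_{\varepsilon,\gamma}(\theta,\gD_n) \;=\; \sup_{\gD' \in \gU(\gD_n)} \E_{\gD'}[L(\theta,z)],
\end{equation*}
and note that by Proposition \ref{prop: compact} the set $\BR(\theta) := \argmax_{\gD' \in \gU(\gD_n)} \E_{\gD'}[L(\theta,z)]$ is non-empty for every $\theta \in \Theta$. Then I would show that $V$ is lower semi-continuous on $\Theta$. Fix $\gD' \in \gU(\gD_n)$ and a sequence $\theta_n \to \theta$; by Assumption \ref{ass: loss}(ii), $L(\cdot,z)$ is lower semi-continuous in $\theta$ for each $z$, and by Assumption \ref{ass: loss}(i) it is non-negative, so Fatou's lemma gives
\begin{equation*}
\liminf_{n\to\infty} \E_{\gD'}[L(\theta_n,z)] \;\ge\; \E_{\gD'}\!\left[\liminf_{n\to\infty} L(\theta_n,z)\right] \;\ge\; \E_{\gD'}[L(\theta,z)].
\end{equation*}
Hence $\theta \mapsto \E_{\gD'}[L(\theta,z)]$ is lower semi-continuous for each fixed $\gD'$, and taking the supremum over $\gD' \in \gU(\gD_n)$ preserves lower semi-continuity, so $V$ is lsc on $\Theta$.

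Second, with $\Theta$ compact and $V$ lower semi-continuous, the Weierstrass extreme value theorem yields some $\theta^* \in \Theta$ that attains $\min_{\theta \in \Theta} V(\theta)$. Applying Proposition \ref{prop: compact} at this $\theta^*$ supplies a maximizer $\gD'^*(\theta^*) \in \BR(\theta^*) = \argmax_{\gD' \in \gU(\gD_n)} \E_{\gD'}[L(\theta^*,z)]$, and the pair $(\theta^*, \gD'^*(\theta^*))$ satisfies the Stackelberg equilibrium conditions as stated.

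The main obstacle is the lower semi-continuity step: Fatou's lemma requires either non-negativity or a uniform integrable minorant, and one must also verify that the expectations involved are well-defined along the sequence $\theta_n$. Non-negativity from Assumption \ref{ass: loss}(i) handles the first concern directly, while Assumption \ref{ass: loss}(iii), which bounds $L(\theta,z)$ by $c[1+\dis^k(z,\widehat z_0)]$ with $k<p$, together with the fact that every $\gD' \in \gU(\gD_n)$ has finite $p$-th moment (inherited from $\gD_n$ via the Wasserstein-$p$ ball, then passed through the KL ball), ensures each expectation is finite. Beyond this, the only other delicate point is that the sup over $\gD'$ of lsc functions is still lsc, which is a standard fact but worth invoking explicitly.
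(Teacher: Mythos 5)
Your proof is correct and takes essentially the same approach as the paper's: both show that the value function $V(\theta)=\sup_{\gD'\in\gU(\gD_n)}\E_{\gD'}[L(\theta,z)]$ is lower semi-continuous on $\Theta$ (via Fatou's lemma for each fixed $\gD'$, together with the fact that a supremum of lower semi-continuous functions is lower semi-continuous), and then invoke the Weierstrass theorem on the compact set $\Theta$, with Proposition \ref{prop: compact} guaranteeing that the inner maximum is attained at $\theta^*$. The paper phrases the lower semi-continuity step by chaining inequalities through the best-response map $\gD'^*(\theta)$ (and gratuitously assumes its uniqueness), but the underlying reasoning is identical to your cleaner formulation.
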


In addition, with the convexity of $L(\theta, z)$ in $\theta$, we can prove the existence of Nash equilibrium.

\begin{theorem}[Minimax Theorem]
\label{thm: minimax}
     If Assumption \ref{ass: loss} holds, $\Theta$ is convex, and $L(\theta, z)$ is convex in $\theta$ for any $z\in \gZ$, then
    \begin{equation}
    \label{eq: minimax}
         \min_{\theta\in \Theta} \max_{\gD' \in \gU(\gD_n)} \E_{\gD'}[L(\theta, z)] = \max_{\gD' \in \gU(\gD_n)} \min_{\theta\in \Theta}\,  \E_{\gD'}[L(\theta, z)].
    \end{equation}
\end{theorem}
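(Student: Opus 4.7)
The plan is to deduce the equality in \eqref{eq: minimax} from Sion's minimax theorem applied to $F(\theta, \gD') := \E_{\gD'}[L(\theta, z)]$ on the product $\Theta \times \gU(\gD_n)$. Proposition \ref{prop: compact} already supplies weak compactness of $\gU(\gD_n)$ and attainment of the inner supremum, so Sion's compactness hypothesis is handled on the $\gD'$ side. The ingredients that remain to verify are (a) convexity of $\gU(\gD_n)$, (b) convexity and lower semicontinuity of $F(\cdot, \gD')$ on the convex set $\Theta$, and (c) concavity and upper semicontinuity of $F(\theta, \cdot)$ on $\gU(\gD_n)$ with respect to the weak topology on $\gP(\gZ)$.

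The key non-routine step is (a). Given $\gD'_1, \gD'_2 \in \gU(\gD_n)$ with respective witnesses $\gD''_1, \gD''_2$, I would take $\lambda \gD''_1 + (1-\lambda)\gD''_2$ as the witness for $\lambda \gD'_1 + (1-\lambda)\gD'_2$. The Wasserstein constraint is preserved because convex combinations of couplings are couplings, so $\Wass_p^p(\gD_n, \cdot)$ is convex in its second argument; the KL constraint is preserved by the joint convexity of the KL divergence. For (b), convexity of $F(\cdot, \gD')$ follows by integrating the pointwise convexity of $L(\cdot, z)$, and lower semicontinuity follows from Fatou's lemma, using that $L \ge 0$ by Assumption \ref{ass: loss}(i) and $L(\cdot, z)$ is lower semicontinuous by Assumption \ref{ass: loss}(ii). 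For (c), linearity (hence concavity) in $\gD'$ is immediate, and upper semicontinuity follows from the Portmanteau-type statement for upper semicontinuous integrands, where the growth bound in Assumption \ref{ass: loss}(iii) together with the uniform $p$-th moment control inherited from the Wasserstein constraint are used to justify truncation.

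With these four conditions in hand, Sion's theorem yields $\inf_\theta \sup_{\gD'} F = \sup_{\gD'} \inf_\theta F$. The outer sup on the left is realized as a maximum on the compact set $\gU(\gD_n)$ by (c), and the outer sup on the right is realized as a maximum because $\gD' \mapsto \inf_\theta F(\theta, \gD')$ is upper semicontinuous as an infimum of upper semicontinuous functions on a weakly compact set; the infimum symbols are read as in the theorem statement. The main obstacle is establishing the upper semicontinuity in (c) when $L(\theta, \cdot)$ is only upper semicontinuous and possibly unbounded: here the structural fact that every $\gD' \in \gU(\gD_n)$ lies within a Wasserstein ball around $\gD_n$, combined with the growth bound $L(\theta, z) \le c[1 + \dis^k(z, \widehat{z}_0)]$ for some $k < p$, is essential to obtain uniform integrability and make the truncation argument go through.
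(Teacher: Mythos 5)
Your proposal is correct and follows essentially the same route as the paper's proof: both verify Sion's minimax theorem by establishing convexity of $\gU(\gD_n)$ (via the same convex-combination-of-witnesses argument using joint convexity of $\KL$ and convexity of the Wasserstein constraint), convexity and lower semicontinuity of $\theta \mapsto \E_{\gD'}[L(\theta,z)]$ via Fatou and Assumption \ref{ass: loss}, linearity and weak upper semicontinuity in $\gD'$, and then invoke weak compactness from Proposition \ref{prop: compact} to obtain attainment of the outer maxima. Your note that it is $\Wass_p^p$ (not $\Wass_p$ itself) that is convex in its second argument is a minor but accurate refinement of the inequality the paper writes, and your emphasis on the growth bound plus the enclosing Wasserstein ball for uniform integrability mirrors the role played by the reference to \citet{yue2022linear-wass} in the paper's argument.
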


We thus have established the existence for the Stackelberg and Nash equilibria of the SR-WDRO problem (\ref{eq: SR-dro}), viewed as a zero-sum game between the learner and the adversary. The existence of Stackelberg equilibrium is considered for the first time for WDRO approaches.
It can be observed that the Stackelberg equilibrium requires weaker conditions to exist than the Nash equilibrium, which gives the {\em smallest statistically robust loss} among all decision models can be considered optimal robust in certain sense.
Our result significantly extends the scope of WDRO theory and tackles a substantially more complex and general formulation compared with previous work \citep{shafieezadeh2023DRO-nash}. 

\section{Computationally Tractable Reformulation of SR-WDRO}
Having established the robustness certificate and equilibrium existence of SR-WDRO, we now turn our attention to practical methods for solving the optimization problem defined in Equation (\ref{eq: SR-dro}). We first show that the inner maximization problem (\ref{eq: sup loss}) can be simplified to a minimization problem. Next, we apply our previous results to classification tasks and provide an approximate algorithm to solve the robust loss effectively.
\begin{prop}[Strong duality]
\label{prop: strong-dual}
    $\gL_{\varepsilon, \gamma}(\theta, \gD_n)$ in (\ref{eq: sup loss}) admits the dual formulation for all $\gamma>0$:
    \begin{equation}
    \label{eq: dual}
        \gL_{\varepsilon, \gamma}(\theta, \gD_n)= \inf\limits_{\substack{\lambda, \beta \ge 0\\ \eta\ge \max\limits_{z\in \gZ}L(\theta, z)}} \{ \lambda \varepsilon^p + \E_{\gD_n} [\varphi(\lambda, \beta, \eta, z)]\}
    \end{equation}
    where $\varphi(\lambda, \beta, \eta, z) = \sup\limits_{\xi\in \gZ}\{\beta \log\frac{\beta}{\eta - L(\theta, \xi)} + (\gamma-1)\beta + \eta - \lambda \dis^p(z, \xi)\}$.
\end{prop}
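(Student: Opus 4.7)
The plan is to decompose the joint constraint defining $\gU(\gD_n)$ into two nested suprema, apply strong duality separately to each, and combine the results via a minimax swap. Writing
$\gL_{\varepsilon, \gamma}(\theta, \gD_n) = \sup_{\gD'':\Wass_p(\gD_n,\gD'')\le \varepsilon}\,\sup_{\gD': \KL(\gD'', \gD') \le \gamma} \E_{\gD'}[L(\theta, z)]$,
I would first solve the inner KL-constrained problem in closed form for each fixed $\gD''$, then dualize the outer Wasserstein constraint.

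For the inner problem I would apply Lagrangian duality with multipliers $\beta \ge 0$ (KL constraint) and $\eta$ (probability-mass normalization). Pointwise maximization of the Lagrangian in the density $p'$ yields the candidate optimizer $p'^{*}(z) = \beta p''(z)/(\eta - L(\theta, z))$, which forces $\eta > L(\theta, z)$ on the support of $\gD''$ and hence the constraint $\eta \ge \max_{z\in\gZ} L(\theta, z)$. Substituting $p'^{*}$ back into the Lagrangian and simplifying produces the dual value $\eta + (\gamma - 1)\beta + \E_{\gD''}[\beta \log(\beta/(\eta - L(\theta, z)))]$. Since $\gamma > 0$, Slater's condition is satisfied by $\gD' = \gD''$ (which gives $\KL = 0 < \gamma$), so strong duality holds and the inner supremum equals the infimum of this expression over admissible $(\beta, \eta)$.

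Next I would push the outer supremum inside the inner infimum. The objective is affine (hence concave) in $\gD''$ and jointly convex in $(\beta, \eta)$; the latter I would verify by checking that the Hessian of $(\beta, v) \mapsto \beta \log(\beta/v)$ is positive semidefinite and composing with the affine map $\eta \mapsto \eta - L(\theta, \xi)$. The Wasserstein ball $\{\gD'' : \Wass_p(\gD_n, \gD'') \le \varepsilon\}$ is weakly compact by the argument used for Proposition \ref{prop: compact}, so Sion's minimax theorem justifies swapping the sup over $\gD''$ and the inf over $(\beta, \eta)$. The remaining inner supremum is a pure Wasserstein-DRO problem for the integrand $f(\xi) := \beta \log(\beta/(\eta - L(\theta, \xi)))$, to which I apply the standard Wasserstein strong-duality theorem to obtain $\inf_{\lambda \ge 0}\{\lambda \varepsilon^p + \E_{\gD_n}[\sup_{\xi\in\gZ}(f(\xi) - \lambda \dis^p(z, \xi))]\}$. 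Absorbing the constant $\eta + (\gamma - 1)\beta$ inside the inner sup (it depends on neither $z$ nor $\xi$) reconstructs $\varphi(\lambda, \beta, \eta, z)$ and yields formula (\ref{eq: dual}).

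The main obstacle I expect is justifying strong duality for the reverse-KL subproblem. Because the constraint is $\KL(\gD'', \gD')$ rather than the more familiar $\KL(\gD', \gD'')$, the dual is not of MGF type and the condition $\eta \ge \max_z L(\theta, z)$ is essential; careful treatment of the boundary case $\beta \to 0^+$ and of the absolute-continuity requirement between $\gD''$ and $\gD'$ is needed to rule out spurious relaxations and to ensure the dual infimum is actually attained in the limit. A secondary hurdle is verifying the upper semi-continuity and growth hypotheses on $f$ needed for the Wasserstein duality theorem, which follow from Assumption \ref{ass: loss} (in particular parts (i) and (iii)) after noting that $-\log(\eta - L)$ is continuous on the compact set where $\eta - L \ge \eta - \max L > 0$.
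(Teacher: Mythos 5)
Your plan follows exactly the same skeleton as the paper's proof: decompose the nested supremum, dualize the reverse-KL inner problem to get the $(\beta,\eta)$-parametrized dual (the paper cites Lemma \ref{lemma:KL-dual} from \citet{van2021KL-DRO}, where you re-derive it from a Lagrangian, and your substitution $p'^{*}(z)=\beta p''(z)/(\eta-L(\theta,z))$ does reproduce the cited formula), swap the outer Wasserstein supremum with the $(\beta,\eta)$ infimum by Sion's theorem using compactness of the $\Wass_p$-ball, then apply Wasserstein strong duality (Lemma \ref{lemma:wdro-dual}) and absorb the constant $\eta+(\gamma-1)\beta$ into $\varphi$. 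The one concrete device you flag as ``needed'' but do not supply is exactly what the paper introduces: it replaces the constraint $\eta\ge\max_{z\in\gZ}L(\theta,z)$ by $\eta\ge\max_{z\in\gZ}L(\theta,z)+\tau$ with $\tau>0$, carries out the Sion swap at fixed $\tau$ (so that $z\mapsto\beta\log(\beta/(\eta-L(\theta,z)))$ is bounded above, giving the upper semicontinuity in $\gQ$ that Sion requires), and then sends $\tau\to 0^+$; the boundary difficulty is at $\eta\downarrow\max L$ rather than $\beta\to0^+$ as you wrote.
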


In contrast, the dual reformulation for classical WDRO
only involves $\lambda$ and takes the expectation of the implicit function $\sup_{\xi\in \gZ}\{L(\theta, \xi) - \lambda \dis^p(z, \xi)\}$ with respect to   $\gD_n$. The additional variables $\beta, \eta$ are introduced to account for the KL divergence.

However, directly solving the statistically robust loss through either (\ref{eq: sup loss}) or (\ref{eq: dual}) is intractable in practice. In the remainder of this section, we will primarily focus on applying
results to classification tasks and provide a finite reformulation to solve the statistically robust loss approximately.

In classification tasks, it is often intuitive to restrict adversarial perturbations solely to input samples (feature vectors). We present an adaptation of existing results to such scenarios. Let $\gZ = (X, Y) \in \gX \times \sR$, where $X\in \gX \subset \sR^{(m-1)}$ represents input data, and $Y\in \R_+$ denotes a label.
In the classification settings, $Y \in \{1, ..., K\}$.
We consider an adversary capable of perturbing only the input samples $X$. This constraint can be elegantly incorporated into our robust formulation by defining the Wasserstein cost function $d:\gZ \times \gZ \rightarrow\sR+$ as follows. For $z = (x, y)$ and $z' = (x', y')$, we introduce the {\em sample-shift cost function}:
$$\dis(z, z') = \dis_\gX(x, x') + M \cdot \textbf{1}\{y\neq y'\}$$
where $\dis_\gX$ is the distance (like $L_q$ norm) on $\gX$ and we assume that $M$ is a very large positive number to prevent label conversion.

\begin{algorithm}[tb]
\caption{Statistically Robust WDRO Training}
\label{alg:algorithm}
\textbf{Input}: Training set $\mathcal{S}_n$, number of iterations $T$, batch size $N$, learning rate $\eta_\theta, \ \eta_\lambda$, adversary parameters: attack budget $\varepsilon$, steps $K$, step size $\eta$.\\
\textbf{Output}: Robust model $\theta$.
\begin{algorithmic}[1]
\FOR{$t = 1$ to $T$}
    \STATE Sample a mini-batch $\{(x_i, y_i)\}_{i=1}^N \in \mathcal{S}_n$
    \STATE Find UDR adversarial examples $\{x_i^a\}_{i=1}^N$:
    \STATE \quad (1) $x_i^0 = x_i + \delta$ where $\delta \sim \text{Uniform}(-\varepsilon, \varepsilon)$
    \STATE \quad (2)
    for $k = 1$ to $K$:
        \STATE \qquad (a) $x_i^{inter} = x_i^{k-1} + \eta \text{sign}(\nabla_x L(\theta, \left(x_i^{k-1}, y_i\right)))$
        \STATE \qquad (b) $x_i^{k} = x_i^{inter} - \eta\lambda\nabla_x \widehat{d}(x_i^{inter}, x_i)$
    \STATE \quad (3) Clip to valid range: $x_i^a = \clip(x_i^K, 0, 1)$
    \STATE Update parameter $\lambda$: $
    \lambda \leftarrow \lambda-\eta_\lambda\left(\varepsilon-\frac{1}{N} \sum_{i=1}^N \widehat{d}_ \mathcal{X}\left(x_i^a, x_i\right)\right)$
    \STATE Compute optimal weights $\{p_i\}_{i=1}^N$ in problem (\ref{eq: approximate loss})
    \STATE Update model parameter: $
    \theta \leftarrow \theta-\eta_{\theta}\nabla_\theta \gL_{\varepsilon, r}\left(\theta; \{(x_i, y_i)\}_{i=1}^n\right)$
\ENDFOR
\end{algorithmic}
\end{algorithm}

Now we revisit the statistically robust loss:
\begin{align*}
    &\gL_{\varepsilon, \gamma}(\theta, \gD_n) = \sup_{\gD' \in \gU(\gD_n)} \E_{\gD'}[L(\theta, z)]  \\
    &= \sup \left\{ \sup \left\{\mathbb{E}_{\gD'} \left[L(\theta, z)\right]: \gD^{\prime}, \KL\left(\gQ, \gD^{\prime}\right) \leq \gamma\right\}: \gQ , \Wass_p(\gD_n, \gQ) \le \varepsilon \right\}.
\end{align*}
Intuitively, we start with the empirical distribution $\gD_n$ and identify the most adversarial sample distribution that satisfies the Wasserstein distance constraint. Then, within the KL divergence constraint, similarly to \citep{bennouna2023certified_LP}, we adjust the sample weights to find the sample distribution that maximizes the final loss.
To identify the most adversarial sample distribution, we employ the UDR method \citep{bui_udr}, which leverages the dual formulation of WDRO. In order to match this method, we set
$p=1$ in our framework. This approach constructs a new Wasserstein cost function to search for adversarial samples that incorporate both local and global information, in contrast to Adversarial Training that considers only local information for each sample.


For any given training data (sub)set $\{z_i=(x_i, y_i)\}_{i=1}^n$, we can compute the statistically robust loss $\gL_{\varepsilon, \gamma}(\theta)$ approximately as
\begin{equation}
\label{eq: approximate loss}
    \gL_{\varepsilon, \gamma}(\theta, \{(x_i, y_i)\}_{i=1}^n)=\left\{
    \begin{array}{l}
    \max \sum_{i=1}^n p_i L\left(\theta, {(x_i', y_i)}\right) \\[1ex]
    \text{s.t.} \quad \begin{array}{l}
    x_i'\in \argmax_{x_i'} L(\theta, \left(x_i^{\prime}, y_i\right))-\lambda \widehat{d}_{\mathcal{X}}\left(x_i^{\prime}, x_i\right) \\
    \vp \in \R_{+}^{n}, \sum_{i=1}^n p_i =1, \\
    \vq \in \R_{+}^{n}, q_i = \frac{1}{n} \,  \forall i=1, \cdots, n,\\
    \KL(\vq||\vp) =  \sum_{i=1}^n q_i\log \left(\frac{q_i}{p_i}\right) \le \gamma.
\end{array}
\end{array}
\right.
\end{equation}
The $\lambda$ comes from the dual formulation of WDRO and is a learnable parameter in UDR, and $$
\widehat{d}_{\mathcal{X}}\left(x, x^{\prime}\right):=\textbf{1}\left\{\dis_{\mathcal{X}}\left(x, x^{\prime}\right)<\varepsilon\right\} \dis_{\mathcal{X}}\left(x, x^{\prime}\right)+\textbf{1}\left\{\dis_{\mathcal{X}}\left(x, x^{\prime}\right) \geq \varepsilon\right\}\left(\varepsilon+\frac{\dis_{\mathcal{X}}\left(x, x^{\prime}\right)-\varepsilon}{\tau}\right)
$$
where $\tau>0$ is the temperature to control the growing rate of the cost function when $x'$ is out
of the perturbation ball.

The statistically robust loss is therefore in essence simply a re-weighting of adversarial loss. Determining the statistically robust loss exactly
requires evaluating the adversarial loss $L(\theta, z')$ where the adversarial example is computed through the WDRO method \citep{bui_udr}, subsequently solving the exponential cone problem with $n$ variables and constraints. This optimization can be efficiently executed using standard solvers. More details are shown in Algorithm \ref{alg:algorithm}.

\section{Experiments}
In this section, we investigate the efficacy of our SR-WDRO training through extensive experiments on the CIFAR-10 and CIFAR-100 datasets. 
We will show that our approach largely circumvents the robust overfitting phenomenon experienced by WDRO and achieves better adversarial robustness than other robust methods. The implementation of our approach is publicly available at the following GitHub repository: https://github.com/hong-xian/SR-WDRO.

%
We compare our method with PGD-AT \citep{madry2017AT} and distributional robust methods: UDR-AT \citep{bui_udr},
HR training \citep{bennouna2023certified_LP}.
We train ResNet-18 \citep{he2016resnet} with 200 epochs, and use SGD as the optimizer with learning rate decay by 0.1 at the epoch 100 and 150.
For all methods, we implement adversarial training with $\{k = 10, \varepsilon= 8/255, \eta = 2/255\}$ where $k$ is the iteration number, $\varepsilon$ is the attack budget and $\eta$ is the step size. We use different attacks to evaluate the defense methods, including: 1) PGD-10 with $\{k = 10, \varepsilon= 8/255, \eta = \varepsilon/4\}$, 2) PGD-200 with
$\{k = 200, \varepsilon= 8/255, \eta = \varepsilon/4\}$, 3) Auto-Attack (AA) \citep{croce2020autoattack} with $\varepsilon =8/255$, which is an ensemble method of four different attacks.
The $l_\infty$-norm is used for all measures. Unless otherwise specified, we set $\gamma=0.1$ to its default value. The ablation study on $\gamma$ is provided in latter part.
We repeated all the experiments three times for different seeds. More training details are provided in Appendix \ref{app: exp}.

\begin{figure}[ht]
\centering
\subfigure{\includegraphics[width=0.45\columnwidth]{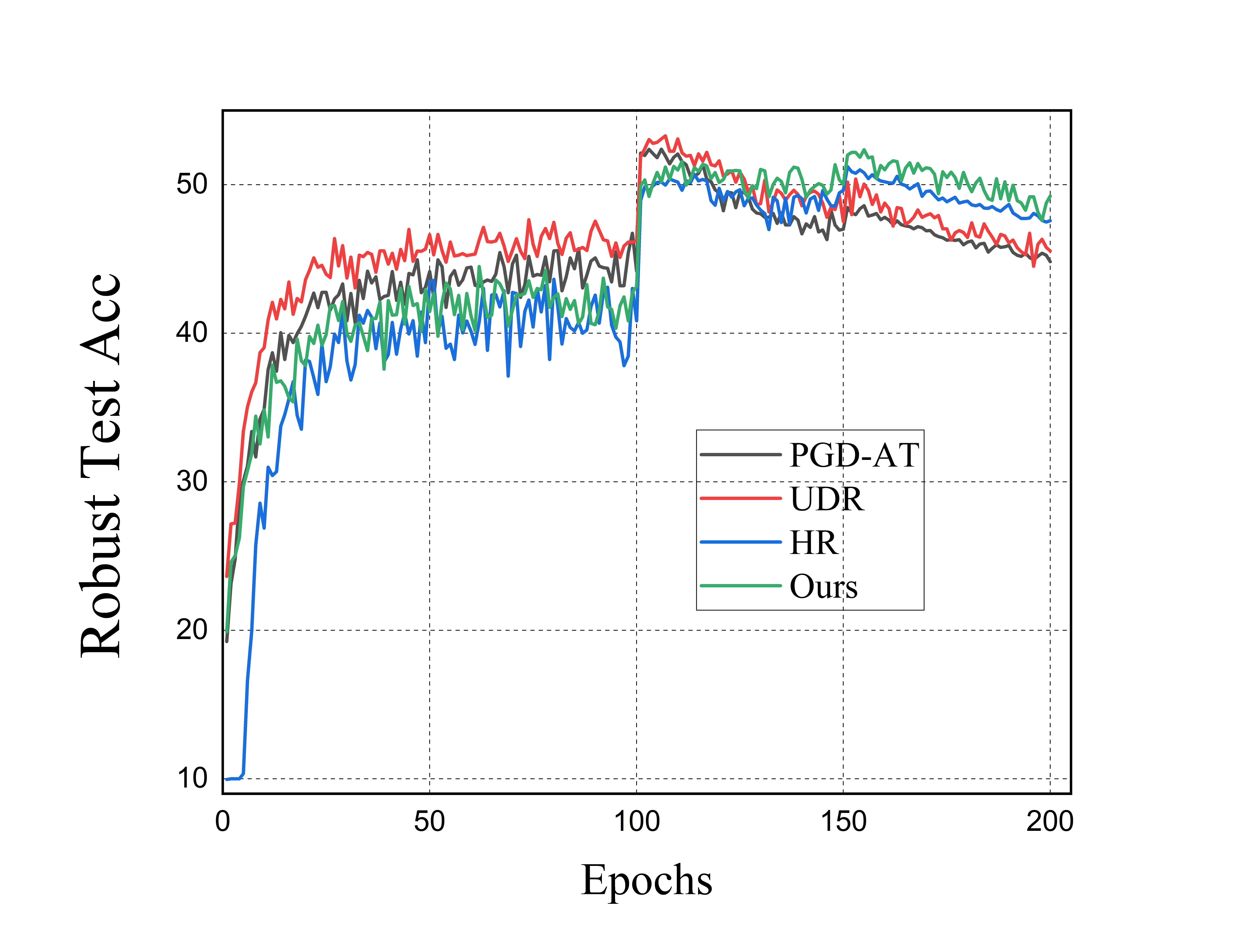}}
\subfigure{\includegraphics[width=0.45\columnwidth]{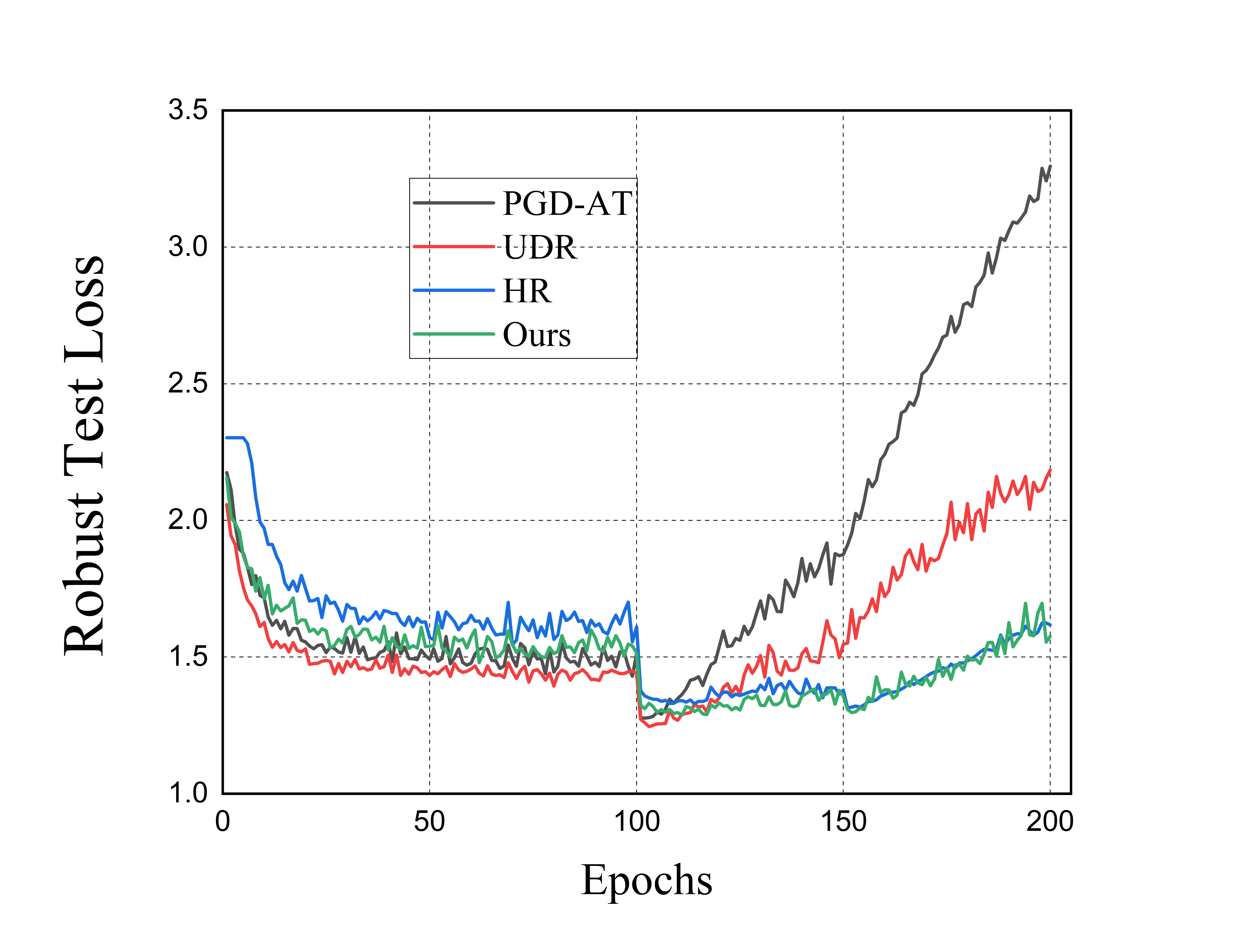}}
\vspace{1em}
\caption{Comparison of SR-WDRO against other robust training methods on CIFAR10 $(\varepsilon= 8/255)$. Left: Robust test accuracy. Right: Robust test loss. Our method (green) demonstrates competitive performance in both metrics, particularly in mitigating robust overfitting and higher robust test accuracy.}
\label{fig: methods}
\end{figure}

\begin{table}[ht]
    \caption{Performance of different methods on CIFAR-10 and CIFAR-100 using a ResNet-18 for $l_\infty$ with budget 8/255.
    We use PGD-10 as the attack to evaluate robust performance.
    Nat is the natural test accuracy.
    The ``Best" Robust Test Acc is the highest robust test accuracy achieved during training whereas the ``Final" Robust Test Acc is last epoch's robust accuracy. Best scores are highlighted in boldface.
    }
    \label{tab: cifar_ro}
    \vspace{0.5em}

    \centering
    \begin{tabular}{ccccc}
    \toprule
    \multirow{2}{*}{Robust Methods} & \multirow{2}{*}{Nat} & \multicolumn{3}{c}{Robust Test Acc (\%)} \\
    \cmidrule(l){3-5}
    & & Final & Best & Diff \\
    \midrule
    \multicolumn{5}{c}{CIFAR-10} \\
    \midrule
    PGD-AT & 84.80 $\pm$ 0.14 & 45.16 $\pm$ 0.19 & 52.91 $\pm$ 0.11 & 7.75 $\pm$ 0.17 \\
    UDR-AT & 83.87 $\pm$ 0.26 & 46.60 $\pm$ 0.27 & \textbf{53.23 $\pm$ 0.30} & 6.63 $\pm$ 0.57 \\
    HR & 83.95 $\pm$ 0.32 & 47.32 $\pm$ 0.59 & 51.23 $\pm$ 0.25 & 3.90 $\pm$ 0.47 \\


    Ours & 83.34 $\pm$ 0.16 & \textbf{48.58 $\pm$ 0.21} & 51.95 $\pm$ 0.19 & \textbf{3.36 $\pm$ 0.06} \\
    \midrule
    \multicolumn{5}{c}{CIFAR-100} \\
    \midrule
    PGD-AT & \textbf{57.42 $\pm$ 0.28} & 21.87 $\pm$ 0.20 & \textbf{29.37 $\pm$ 0.20} & 7.50 $\pm$ 0.35 \\
    UDR-AT & 56.20 $\pm$ 0.54 & 22.07 $\pm$ 0.12 & 29.35 $\pm$ 0.02 & 7.28 $\pm$ 0.10 \\
    HR & 56.69 $\pm$ 0.43 & 21.15 $\pm$ 0.20 & 28.35 $\pm$ 0.30 & 7.20 $\pm$ 0.49 \\

    Ours & 56.71 $\pm$ 0.08 & \textbf{23.09 $\pm$ 0.20} & 28.95 $\pm$ 0.29 & \textbf{5.86 $\pm$ 0.50} \\
    \bottomrule
    \end{tabular}

\end{table}

\textbf{Mitigating robust overfitting.}
First, we present the adversarial loss and adversarial accuracy on the test set over the 200 training epochs for different methods in Figure \ref{fig: methods}. It can be observed that the UDR method, similar to standard AT, suffers from severe robust overfitting. This is particularly evident in the adversarial loss on the test set, which increases sharply when robust overfitting occurs.
Both HR and our SR-WDRO effectively mitigate this issue, and our method demonstrates superior robust test accuracy. Furthermore, in Table \ref{tab: cifar_ro}, we report the best robust test accuracy and the final robust test accuracy throughout the training process for different methods. The gap between these two metrics is the smallest in our method, indicating that our approach mitigates robust overfitting more effectively. Furthermore, our method remains consistently effective across different attack budgets as shown in Table \ref{tab: cifar10_large} in the Appendix \ref{app: exp}.

As another popular adversarial defense strategy, TRADES \citep{zhang2019trades} also exhibits overfitting tendencies, albeit to a lesser extent compared to PGD-AT. Our proposed approach effectively mitigates overfitting when applied to UDR-TRADES, as illustrated in Table \ref{tab: cifar10_ro_trades} in Appendix \ref{app: exp}.

\textbf{Results on WRN28-10.} We would like to provide further experimental results on the CIFAR-10 dataset with WRN28-10 as shown in Table \ref{tab: networks}. It can be observed that our method achieves the best performance in mitigating adversarial overfitting, as evidenced by the lowest robustness gap (Diff). Furthermore, it achieves the highest robust accuracy, demonstrating the effectiveness of our approach in enhancing robustness while maintaining strong performance on large models like WideResNet28-10.

\begin{table}[htbp]
\caption{Robust performance of different robust methods using WideResNet28-10 on CIFAR-10 for $l_\infty$ with budget  8/255.}
\label{tab: networks}
\vspace{0.5em}
    \centering
    \begin{tabular}{ccccc}
    \toprule
    \multirow{2}{*}{Robust Methods} & \multirow{2}{*}{Nat} & \multicolumn{3}{c}{Robust Test Acc (\%)} \\
    \cmidrule(l){3-5}
    & & Final & Best & Diff \\
    \midrule
    \multicolumn{5}{c}{CIFAR-10} \\
    \midrule
    PGD-AT & 86.51 $\pm$ 0.16 & 48.81 $\pm$ 0.49 & 55.64 $\pm$ 0.07 & 6.83 $\pm$ 0.55 \\
    UDR-AT & 85.99 $\pm$ 0.15 & 49.01 $\pm$ 0.13 & \textbf{55.82 $\pm$ 0.19} & 6.81 $\pm$ 0.31 \\
    HR & 84.89 $\pm$ 0.16 & 48.45 $\pm$ 0.31 & 52.45 $\pm$ 0.27 & 4.00 $\pm$ 0.34 \\


    Ours & 84.52 $\pm$ 0.27 & \textbf{51.26 $\pm$ 0.42} & 53.87 $\pm$ 0.06 & \textbf{2.61 $\pm$ 0.47} \\
    \bottomrule
    \end{tabular}

\end{table}

\textbf{Robustness for smaller $\varepsilon$.}
Theorem \ref{thm: robusness} shows that to ensure robustness on the test set, it is generally necessary to employ a larger attack budget during training compared to testing.
In this experiment, we examine the robustness generalization by attacking different robust methods with smaller attack budget than the training attack budget while keeping other parameters of PGD attack the same. The results of this experiment are shown in Table \ref{tab: cifar_small_test}.
Our method consistently improves robustness for a smaller test adversarial budget. This empirical observation corroborates the findings of Theorem \ref{thm: robusness}, indicating that our approach provides superior generalization guarantees for test set robustness when the training attack budget marginally exceeds that used during testing.

\begin{table}[ht]
\caption{Robustness evaluation on CIFAR-10 using ResNet-18 for $l_\infty$ under different attack budget  $\varepsilon$. The adversarial training budget is set to be 8/255 consistently. We use stronger attacks such as PGD-200 and Auto-Attack.}
\label{tab: cifar_small_test}
\vspace{0.5em}
\small
\centering
\setlength{\tabcolsep}{4pt}
\begin{tabular}{ccccccccc}
\toprule
\multirow{2}{*}{\Large{$\varepsilon$}}
 & \multicolumn{2}{c}{8/255}
& \multicolumn{2}{c}{6/255} & \multicolumn{2}{c}{4/255} \\
\cmidrule(lr){2-3}
\cmidrule(lr){4-5}
 \cmidrule(lr){6-7}
 & PGD-200 & AA
& PGD-200 & AA & PGD-200 & AA \\
\midrule
PGD-AT
 & $42.86 \pm 0.27$ & $41.62 \pm 0.25$
& $54.14 \pm 0.18$ & $53.30 \pm 0.19$ & $65.76 \pm 0.03$ & $65.18 \pm 0.13$ \\
UDR-AT
 & $44.59 \pm 0.27$ & $42.81 \pm 0.24$
& $55.29 \pm 0.18$ & $53.80 \pm 0.06$ & $66.04 \pm 0.34$ & $64.92 \pm 0.24$ \\
HR
 & $45.27 \pm 0.52$ & $41.99 \pm 0.38$
& $56.66 \pm 0.30$ & $53.70 \pm 0.24$ & $67.38 \pm 0.16$ & $65.09 \pm 0.24$ \\
Ours
 & \textbf{46.79 $\pm$ 0.11} & \textbf{44.06 $\pm$ 0.32}
& \textbf{57.57 $\pm$ 0.53} & \textbf{55.09 $\pm$ 0.46} & \textbf{67.56 $\pm$ 0.34} & \textbf{65.76 $\pm$ 0.31} \\
\bottomrule
\end{tabular}

\end{table}

\textbf{Different choice of $\gamma$.}
Figure \ref{fig: diff_r} illustrates the effect of varying values of statistical error $\gamma$ in our framework on mitigating robust overfitting.
%
We observe that larger $\gamma$ values demonstrate reduced overfitting tendencies, particularly evident in the loss curves.
However, excessively large $\gamma$  leads to a decrease in natural test accuracy, which drops from $84.8\% (\gamma=0)$ to $80.4\% (\gamma=0.2)$. Consequently, we default to $\gamma=0.1$ as a trade-off between robust overfitting mitigation and natural test accuracy preservation.

\begin{figure}[htbp]
\centering
\subfigure{\includegraphics[width=0.45\columnwidth]{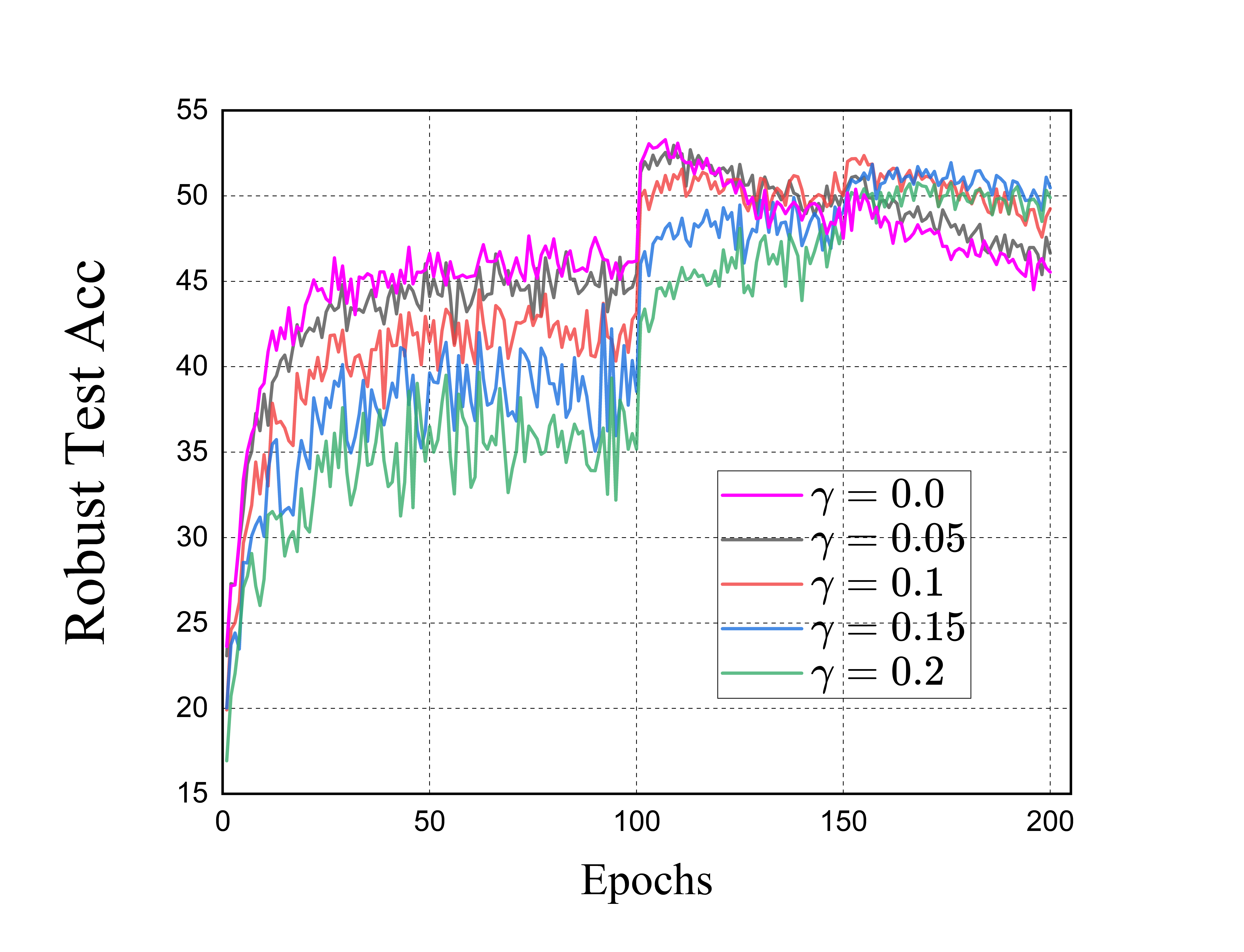}}
\subfigure{\includegraphics[width=0.45\columnwidth]{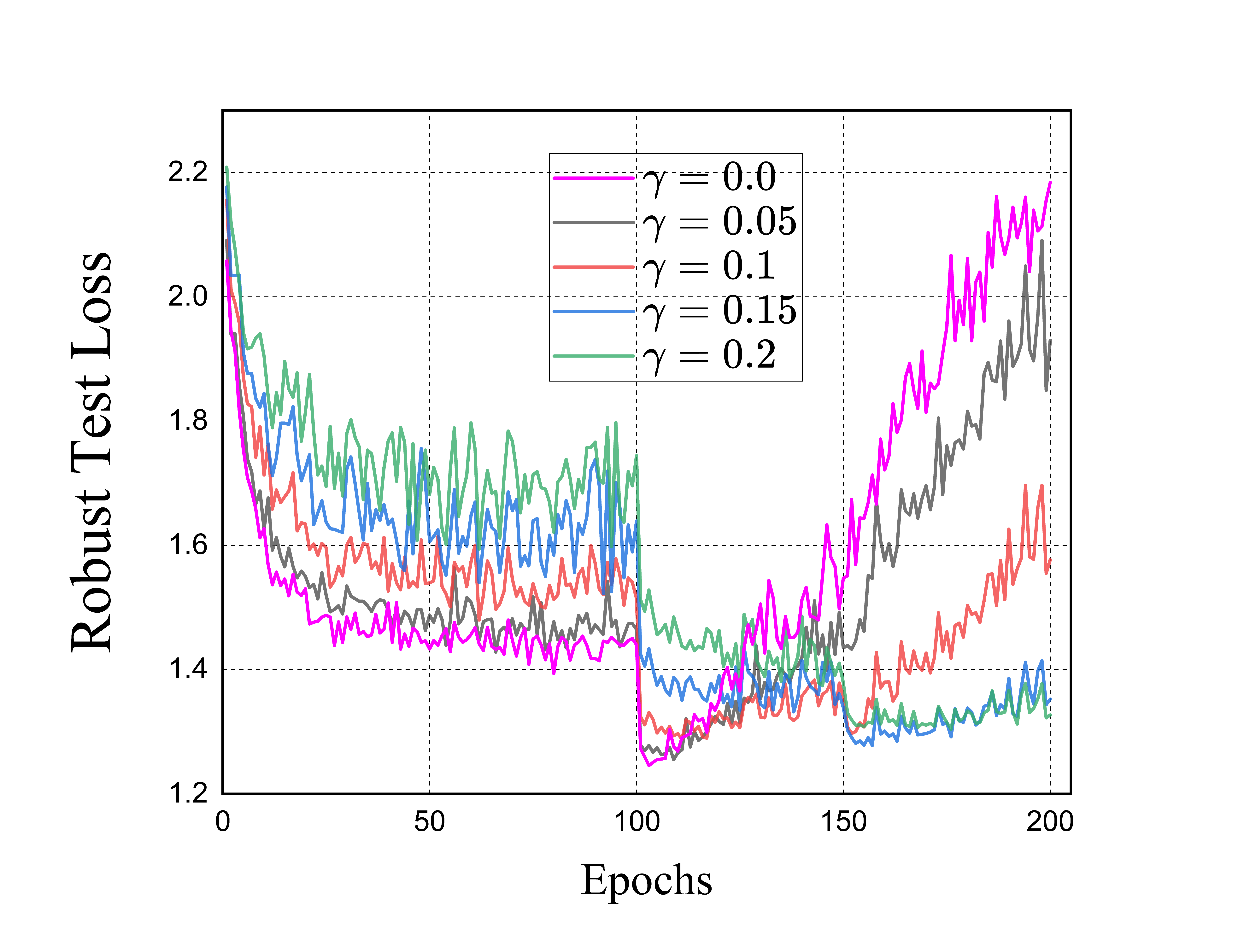}}
\vspace{1em}
\caption{Impact of statistical error $\gamma$ on mitigating overfitting in our method. Experiments conducted on CIFAR10 with $\varepsilon = 8/255$. Left: Robust test accuracy. Right: Robust test loss.
}
\label{fig: diff_r}
\end{figure}

\section{Conclusions}
In this paper, we introduce SR-WDRO, a novel approach to address the challenge of robust overfitting in WDRO. Our method combines Kullback-Leibler divergence and Wasserstein distance to create a new ambiguity set, providing theoretical guarantees that adversarial test loss can be upper bounded by the statistically robust training loss with high probability and establishing conditions for Stackelberg and Nash equilibria between the learner and adversary.
We developed a practical training algorithm based on this framework, which maintains computational efficiency comparable to standard adversarial training methods. Extensive experiments on benchmark datasets demonstrated that SR-WDRO effectively mitigates robust overfitting and achieves superior adversarial robustness compared to existing approaches.
Our work contributes both theoretical insights and practical advancements to the field of robust machine learning. The proposed method offers a promising direction for developing more reliable and robust models in the face of  distributional shifts.

\textbf{Limitations and future works.} Our SR-WDRO mitigates robust overfitting and improves robust accuracy compared to existing methods, although compromises with some natural accuracy drop and slight computational overhead.
Due to the intractability of Equations (\ref{eq: sup loss}) and (\ref{eq: dual}), a better approximation is welcomed to solve SR-WDRO to mitigate the compromise of accuracy and computational cost.
Furthermore, we mainly focus on supervised learning in this paper, expanding SR-WDRO to broader tasks including unsupervised learning, regressive tasks is a promising direction.

\section*{Acknowledgments}
This work is supported by NSFC grant No.12288201, grant GJ0090202, and NKRDP grant No.2018YFA0704705. The authors thank anonymous referees for their valuable comments.



\newpage
\appendix
\section{Appendix}

\subsection{Relationship between different discrepancies}
\label{app: relations}
We first introduce Levy-Prokhorov and total variation metrics. Then we will establish relationships among the four metrics:
the Wasserstein metric, Levy-Prokhorov metric, total variation metric, and KL divergence.

\begin{defi}
    The Levy-Prokhorov metric between $\mu, \nu \in \gP(\gZ)$ is:
    \begin{equation*}
        \LP (\mu, \nu) := \inf \left\{ \tau > 0 ~|~ \mu(A) \leq \nu (A^{\tau}) + \tau  \quad \forall A \in \mathcal{B}(\gZ) \right\}
    \end{equation*}
    where $A^{\tau} = \{z\in \gZ: \inf_{z'\in A} \dis(z, z')\le \tau\}$.
\end{defi}

\begin{defi}
    The total variation distance between any $\mu, \nu \in \gP(\gZ)$ is:
    \begin{equation*}
        \TV(\mu, \nu) = \inf\left\{ \tau > 0 ~|~ \mu(A) \leq \nu (A) + \tau  \quad \forall A \in \mathcal{B}(\gZ) \right\}.
    \end{equation*}
\end{defi}


The KL divergence (relative entropy) has some basic properties: (1) $\KL(\mu, \nu) \ge 0$, which equal to $0$ if and only if $\mu = \nu$; (2) KL is not symmetric and does not satisfy the triangle inequality.

Here, we give some relationships between these discrepancies.

\begin{prop}
\label{prop: metric}
    For any $\mu, \nu \in \gP(\gZ)$, the Wasserstein metric and Levy-Prokhorov metric satisfy:
    \begin{equation*}
        \LP(\mu, \nu)^{\frac{p+1}{p}} \le  \Wass_p(\mu, \nu) \le (\diam(\gZ) + 1) {\LP (\mu, \nu)}^\frac{1}{p}.
    \end{equation*}
\end{prop}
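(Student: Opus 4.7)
The plan is to prove the two inequalities separately, using a Markov-type estimate for the lower bound and Strassen's coupling theorem for the upper bound. Throughout, write $\tau_* = \LP(\mu,\nu)$ and recall that $\diam(\gZ) < \infty$.

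For the lower bound $\LP(\mu,\nu)^{(p+1)/p} \le \Wass_p(\mu,\nu)$, the strategy is to show that any transport plan $\pi$ with marginals $\mu,\nu$ certifies a Levy--Prokhorov bound. Fix an arbitrary coupling $\pi \in \Pi(\mu,\nu)$ and any Borel set $A$. Split according to whether $\dis(z,z') > \tau$ or not: this yields
\[
\mu(A) \le \pi\bigl(\{z\in A,\ z'\in A^\tau\}\bigr) + \pi\bigl(\{\dis(z,z')>\tau\}\bigr) \le \nu(A^\tau) + \frac{\E_\pi[\dis^p(z,z')]}{\tau^p},
\]
by Markov's inequality. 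Now choose $\tau$ so that the error term equals $\tau$ itself, namely $\tau = \bigl(\E_\pi[\dis^p]\bigr)^{1/(p+1)}$. This gives $\LP(\mu,\nu) \le \bigl(\E_\pi[\dis^p]\bigr)^{1/(p+1)}$, and taking the infimum over $\pi$ yields $\LP(\mu,\nu) \le \Wass_p(\mu,\nu)^{p/(p+1)}$, equivalent to the claimed inequality.

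For the upper bound $\Wass_p(\mu,\nu) \le (\diam(\gZ)+1)\LP(\mu,\nu)^{1/p}$, the idea is to invoke Strassen's theorem: for every $\tau > \tau_*$ there exists a coupling $\pi \in \Pi(\mu,\nu)$ with $\pi(\{\dis(z,z')>\tau\}) \le \tau$. Splitting the $p$-th moment along this event gives
\[
\E_\pi[\dis^p(z,z')] \le \tau^p \cdot \pi(\{\dis\le\tau\}) + \diam(\gZ)^p \cdot \pi(\{\dis>\tau\}) \le \tau^p + \diam(\gZ)^p\,\tau.
\]
When $\tau \le 1$ we have $\tau^p \le \tau$ since $p\ge 1$, so $\E_\pi[\dis^p] \le (1+\diam(\gZ)^p)\tau$; using the elementary inequality $1+x^p \le (1+x)^p$ for $x\ge 0,p\ge 1$, we obtain $\Wass_p(\mu,\nu) \le (1+\diam(\gZ))\,\tau^{1/p}$. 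Letting $\tau \downarrow \tau_*$ delivers the bound. The case $\tau_* \ge 1$ is handled trivially since $\Wass_p(\mu,\nu) \le \diam(\gZ) \le (\diam(\gZ)+1)\tau_*^{1/p}$.

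The only non-routine ingredient is the application of Strassen's theorem, which requires $\gZ$ to be a Polish space; this is fine since $\gZ$ is a compact metric space. Once that is in hand, both directions reduce to the elementary decomposition of a coupling according to whether $\dis(z,z')$ exceeds a threshold, combined with Markov's inequality on one side and a direct $p$-th-moment estimate on the other. I expect no further obstacles beyond a careful treatment of the approximation $\tau \downarrow \tau_*$ and the edge case $\tau_* \ge 1$.
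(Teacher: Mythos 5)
Your proof is correct and follows essentially the same route as the paper: Markov's inequality plus the observation $\mu(A)\le\nu(A^\tau)+\pi(\dis>\tau)$ for the lower bound, and Strassen's coupling theorem plus a two-regime moment estimate for the upper bound. The one place you improve on the paper's exposition is the lower bound: by fixing an arbitrary coupling $\pi$, tuning $\tau=(\E_\pi[\dis^p])^{1/(p+1)}$, and only then taking the infimum over $\pi$, you sidestep the paper's slightly imprecise step where it first sets $\Wass_p(\mu,\nu)=\tau^{(p+1)/p}$ and then asserts $\E_\pi[\dis^p]/\tau^p\le\tau$ ``where $\pi$ is any joint distribution''---that inequality in fact only holds for a (near-)optimal coupling, not an arbitrary one. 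Your explicit treatment of the case $\tau_*\ge 1$ is also a welcome addition, as the paper's restriction $\tau\in[0,1]$ is never justified.
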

\begin{proof}
    For any joint distribution $\pi$ on random variables $X, Y$,
    \begin{align*}
        \E_\pi [\dis^p(X, Y)] & \le \tau^p \cdot \Pr(\dis(X, Y)\le \tau) + \diam(\gZ)^p \cdot \Pr(\dis(X, Y) > \tau) \\ & \le \tau^p + (\diam(\gZ)^p - \tau^p)\cdot \Pr(\dis(X, Y) > \tau).
    \end{align*}
    If $\LP(\mu, \nu) \le \tau (\tau\in [0, 1])$, we can choose a coupling of $\mu$ and $\nu$ so that $\Pr(\dis(X, Y) > \tau)$ is upper bounded by $\tau$ \citep{huber1981robust}. Then we have:
    \begin{equation*}
        \E_\pi [\dis^p(X, Y)] \le \tau ^p + (\diam(\gZ)^p  - \tau^p) \tau \le \tau (\diam(\gZ)^p + 1).
    \end{equation*}
    Take the infimum of both sides over coupling, we have:
    \begin{equation*}
        \Wass_p(\mu, \nu) = \inf_{\pi} (\E_\pi (\dis^p(X, Y))^{\frac{1}{p}} \le (\tau (\diam(\gZ)^p + 1))^{\frac{1}{p}} \le \tau^{\frac{1}{p}}(\diam(\gZ) + 1).
    \end{equation*}
    In order to bound Levy-Prokhorov metric by Wasserstein metric, choose $\tau$ such that $\Wass_p(\mu, \nu)= \tau^{\frac{p+1}{p}}$, and use Markov's inequality. We have
    \begin{equation*}
        \Pr(\dis(X, Y) >\tau) = \Pr(\dis^p(X, Y) >\tau^p) \le \frac{1}{\tau^p} \E_{\pi}[\dis^p(X, Y)] \le \tau
    \end{equation*}
    where $\pi$ is any joint distribution on $X \times Y$. Then by Strassen's theorem 
    \citep[Theorem 3.7]{huber1981robust}, $\Pr(\dis(X, Y) >\tau) \le \tau$ is equivalent to
     $\mu(A) \leq \nu (A^{\tau}) + \tau$ for all Borel set $A \in \sB(\gZ)$, which means $\LP(\mu, \nu) \le \tau$, thus
    \begin{equation*}
        \LP(\mu, \nu)^{\frac{p+1}{p}} \le \Wass_p(\mu, \nu).
    \end{equation*}
\end{proof}

\begin{prop}
\label{prop: wp-KL}
For any $\mu, \nu \in \gP(\gZ)$, the Wasserstein metric, total variation metric and KL divergence satisfy:
    \begin{equation*}
        \Wass_p(\mu, \nu) \le \diam(\gZ)\cdot \TV(\mu, \nu)^{\frac{1}{p}} \le \diam(\gZ) \cdot \left( \frac{\KL(\mu, \nu)}{2}\right)^{\frac{1}{2p}}.
    \end{equation*}
\end{prop}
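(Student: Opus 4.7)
The plan is to prove the two inequalities separately. For the first inequality $\Wass_p(\mu, \nu) \le \diam(\gZ)\cdot \TV(\mu, \nu)^{1/p}$, I would exploit the standard coupling characterization of total variation: there exists a coupling $\pi$ of $\mu$ and $\nu$ (the so-called maximal coupling, obtainable from Strassen's theorem with $\tau = 0$ after taking the infimum) such that $\Pr_\pi(X \neq Y) = \TV(\mu, \nu)$. On the event $\{X = Y\}$, the cost $\dis^p(X,Y) = 0$, and on $\{X \neq Y\}$, we bound $\dis^p(X,Y) \le \diam(\gZ)^p$. Thus
\begin{equation*}
\E_\pi[\dis^p(X,Y)] \le \diam(\gZ)^p \cdot \Pr_\pi(X \neq Y) = \diam(\gZ)^p \cdot \TV(\mu, \nu),
\end{equation*}
and taking the $p$-th root and then the infimum over couplings yields the first inequality.

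For the second inequality, I would invoke Pinsker's inequality, a classical result stating that $\TV(\mu, \nu) \le \sqrt{\KL(\mu, \nu)/2}$. Raising both sides to the power $1/p$ gives $\TV(\mu, \nu)^{1/p} \le (\KL(\mu, \nu)/2)^{1/(2p)}$. Multiplying by $\diam(\gZ)$ on both inequalities and chaining them yields the full statement.

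Neither step presents a serious obstacle. The only subtle point is the existence of the maximal coupling achieving $\Pr_\pi(X\neq Y) = \TV(\mu,\nu)$, which can either be cited directly or recovered from the Strassen-type argument already used in the proof of Proposition \ref{prop: metric} (by taking $\tau \to 0$ in the inequality $\mu(A) \le \nu(A) + \tau$ that defines TV). If one prefers to avoid appealing to Pinsker by name, the second step can be derived from first principles via the chi-squared bound or via the standard Csiszár--Kullback--Pinsker argument, but given the elementary nature of the result, citing Pinsker is the most economical approach.
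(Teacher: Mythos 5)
Your proof is correct and takes essentially the same route as the paper's: both rely on the coupling characterization of total variation together with the pointwise bound $\dis^p(z,z') \le \diam(\gZ)^p\cdot\mathbb{1}\{z\neq z'\}$, followed by Pinsker's inequality. The only cosmetic difference is that you invoke the maximal coupling realizing $\TV(\mu,\nu)$, whereas the paper bounds the cost inside the infimum over all couplings and then identifies $\inf_\pi \Pr_\pi(z\neq z')$ with $\TV(\mu,\nu)$ — this avoids having to argue that the infimum is attained, though for TV on a Polish space it is.
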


\begin{proof}
    The total variation distance is equivalent to $W_1(\mu, \nu)$ with the optimal transport cost $\dis(z, z') = \sI(z \neq z')$ by definition, where $\sI$ is the indicator function. As $\dis(z, z') \le \diam(\gZ)\cdot \sI(z \neq z') $, we have:
    \begin{align*}
         \Wass_p(\mu, \nu) &=\inf_{\pi \in \Pi(\mu, \nu)} \left\{\E_{(z, z')\sim\pi} \left[\dis^p(z, z')\right] \right\}^{\frac{1}{p}} \\
         & \le \inf_{\pi \in \Pi(\mu, \nu)} \left\{\E_{(z, z')\sim\pi} \left[\diam(\gZ)^p\cdot \sI(z \neq z')\right] \right\}^{\frac{1}{p}}\\
         &= \diam(\gZ) \inf_{\pi \in \Pi\mu, \nu)} \left\{\E_{(z, z')\sim\pi} \left[\sI(z, z')\right] \right\}^{\frac{1}{p}}\\
         & = \diam(\gZ)\cdot \TV(\mu, \nu)^{\frac{1}{p}}.
    \end{align*}
    And the second part is by Pinsker's inequality that $\TV(\mu, \nu) \le \sqrt{\frac{1}{2}\KL(\mu, \nu)}$.
\end{proof}

\subsection{Proofs of Theorems   \ref{thm: gen} and \ref{thm: robusness}}
\label{app: main thm}
In this subsection, we prove formally Theorem \ref{thm: gen} and Theorem \ref{thm: robusness}. The goal is to show that our SR-WDRO training loss is an upper bound on the test loss with high probability.

\begin{lemma}
\label{lemma-ldp}
    Let $\gD_n$ be the observed empirical distribution of $n$ independent samples with true distribution $\gD$ on a compact space $\gZ$. Then for all $\varepsilon >0$, let $\delta = (\frac{\varepsilon}{\diam(\gZ)+1})^p$. We have
    \begin{equation*}
        \Pr\Big(\exists \gD' \in \gP(\gZ), \ \Wass_p(\gD_n, \gD') \le \varepsilon, \ \KL(\gD', \gD)\le \gamma\Big) \ge 1 - e^{-\gamma n}\left(\frac{4}{\delta}\right)^{m(\gZ, \delta)}
    \end{equation*}
    where $m(\gZ, \delta):= \min\{k\ge 0: \exists \xi_1, \cdots, \xi_k \in \gZ, \ \text{s.t.}\ \cup_{i=1}^k \sB(\xi_i, \delta)\supseteq \gZ\}$ denote the internal covering number of the support set $\gZ$.
\end{lemma}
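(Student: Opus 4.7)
The plan is to exhibit, with the stated probability, a single candidate distribution $\gD'$ that simultaneously lies in a Wasserstein ball around $\gD_n$ and a KL ball around $\gD$. The construction will reduce the infinite-dimensional probabilistic event to a finite-dimensional Sanov-type large-deviation bound on a simplex of size $m(\gZ,\delta)$, obtained by partitioning $\gZ$ according to its internal $\delta$-cover.

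I would fix an internal $\delta$-cover $\{\xi_1,\ldots,\xi_m\}$ of $\gZ$ with $m=m(\gZ,\delta)$ and extract a measurable partition $\Pi_1,\ldots,\Pi_m$ with $\Pi_i\subseteq\sB(\xi_i,\delta)$. Writing $p_i=\gD(\Pi_i)$ and $\hat p_i=\gD_n(\Pi_i)$, I would define $\gD'$ cell-by-cell: on each cell with $p_i>0$, let $\gD'$ be the conditional law $\gD(\cdot\mid\Pi_i)$ rescaled to total mass $\hat p_i$; on cells with $p_i=0$, $\gD'$ carries no mass, which is consistent because $\hat p_i=0$ almost surely on those cells. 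Two facts then drop out of this construction. First, $\gD'$ and $\gD$ agree inside each cell up to the marginal scaling, so the chain rule of relative entropy gives $\KL(\gD',\gD)=\KL(\hat p,p)$. Second, coupling $\gD_n$ and $\gD'$ cell-wise keeps paired samples inside a common ball $\sB(\xi_i,\delta)$, hence at bounded pairwise distance; combined with Proposition~\ref{prop: metric} (or equivalently with a direct coupling bound for $\Wass_p$), and with the scaling $\delta=(\varepsilon/(\diam(\gZ)+1))^p$ absorbing the cell-diameter factor, this yields $\Wass_p(\gD_n,\gD')\le\varepsilon$ deterministically.

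It remains to control $\Pr(\KL(\hat p,p)>\gamma)$ for the empirical distribution of $n$ i.i.d.\ categorical draws on $m$ atoms. A non-asymptotic Sanov-type bound of the form
\[
\Pr\bigl(\KL(\hat p,p)>\gamma\bigr)\;\le\;\Bigl(\tfrac{4}{\delta}\Bigr)^{m}e^{-n\gamma}
\]
is expected to follow from a $\delta$-scale covering of the simplex $\Delta_m$ combined with a Chernoff / Donsker--Varadhan exponential inequality at each net point, in the spirit of \citet{dembo2009large}. The main obstacle will be obtaining precisely the prefactor $(4/\delta)^{m}$ instead of the classical method-of-types prefactor $(n+1)^{m}$; this hinges on the explicit net on $\Delta_m$, the local Lipschitz behaviour of $\KL(\cdot,p)$, and careful handling of null cells $\{p_i=0\}$ so that the divergence stays finite. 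Once this sharpened Sanov bound is in place, the event in the lemma is contained in the complement of $\{\KL(\hat p,p)>\gamma\}$, which occurs with probability at least $1-(4/\delta)^{m}e^{-n\gamma}$, and on that event the candidate $\gD'$ witnesses the claim.
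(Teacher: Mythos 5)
Your construction of the cell-wise witness $\gD'$ and the chain-rule reduction $\KL(\gD',\gD)=\KL(\hat p,p)$ are correct and quite clean, and the overall strategy (build an explicit candidate, then reduce to a multinomial Sanov bound) is genuinely different from the paper's, which never constructs a witness. The paper instead defines the bad event $\gA=\{\hat{\gD}:\not\exists\,\gD'\text{ with }\Wass_p(\hat{\gD},\gD')\le\varepsilon,\,\KL(\gD',\gD)\le\gamma\}$ and applies the non-asymptotic Sanov upper bound of \citet{dembo2009large} directly on $\gP(\gZ)$: $\Pr(\gD_n\in\gA)\le m_{\LP}(\gA,\delta)\exp(-n\inf_{\gD'\in\gA^\delta}\KL(\gD',\gD))$, together with $m_{\LP}(\gP(\gZ),\delta)\le(4/\delta)^{m(\gZ,\delta)}$ and an argument that every element of the LP $\delta$-inflation $\gA^\delta$ still has $\KL$ above $\gamma$ (via Proposition~\ref{prop: metric}).

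The gap in your route is exactly the step you flag as an ``obstacle,'' and I do not think it can be closed with the stated constants. Once you commit to the discretized witness $\gD'$, the entire Wasserstein budget $\varepsilon$ has been spent matching $\gD_n$ cell by cell, and what remains is the pure multinomial event $\{\KL(\hat p,p)>\gamma\}$ with no slack. Any covering-net argument that yields a $(4/\delta)^m$-type prefactor necessarily pays for it in the exponent: the $\delta$-inflation of $\{q:\KL(q,p)>\gamma\}$ has $\KL$-infimum strictly below $\gamma$, so you land on $(4/\delta)^m e^{-n\gamma'}$ with $\gamma'<\gamma$, which is weaker than the claim; the method of types keeps the exponent $\gamma$ but gives $(n+1)^m$, also not the claim. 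The paper's argument works precisely because the existential quantifier over $\gD'$ is kept \emph{inside} the event $\gA$: the $\delta$-LP-inflation of $\gA$ is absorbed by the $\varepsilon$-Wasserstein slack (this is exactly what Proposition~\ref{prop: metric} is used for there), so the $\KL$-infimum over $\gA^\delta$ does not drop below $\gamma$. In your setup that slack is gone, and the inflation has nothing to absorb it. (A secondary, smaller issue: a cell $\Pi_i\subseteq\sB(\xi_i,\delta)$ has diameter up to $2\delta$, so the cell-wise coupling only gives $\LP(\gD_n,\gD')\le 2\delta$ and hence $\Wass_p\le 2^{1/p}\varepsilon$; fixing this by covering at radius $\delta/2$ changes the covering number and the constants.) To rescue the approach you would need to reserve part of the $\varepsilon$ budget for the inflation and only spend the rest on the witness, at which point you are effectively re-deriving the paper's LP-inflation argument rather than avoiding it.
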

\begin{proof}
We have:
    \begin{align*}
        & \Pr\Big(\exists D' \in \gP(\gZ), \ \Wass_p\left(\gD_n, \gD'\right) \le \varepsilon, \ \KL\left(\gD', \gD\right) \leq \gamma \Big) \\
        & = \Pr\left(\gD_n \in\left\{\hat{\gD} \in \gP(\gZ): \exists \gD' \in \gP(\gZ) \text { s.t. } \Wass_p(\hat{\gD}, \gD') \leq \varepsilon, \ \KL(\gD', \gD) \leq \gamma \right\}\right) \\
        & =1- \Pr\left(\gD_n \in \gA\right)
    \end{align*}
where $\gA$ is defined as: $\gA^c=\left\{\hat{\gD} \in \gP(\gZ): \exists \gD^{\prime} \in \gP(\gZ) \text { s.t. } \Wass_p(\hat{\gD}, \gD^{\prime}) \leq \varepsilon, \ \KL\left(\gD^{\prime}, \gD\right) \leq \gamma \right\}.$

Let $\gA^\delta=\left\{\gD^{\prime} \in \gP(\gZ): \LP\left(D^{\prime}, D^{\prime \prime}\right) \leq \delta, D^{\prime \prime} \in \gA\right\}$ is the
$\delta$-inflation of the set $\gA$ with respect to the $\LP$ metric. Denote $\sB_{\LP}\left(\gD^{\prime}\right)=\left\{D^{\prime \prime} \in \gP(\gZ): \LP\left(D^{\prime}, D^{\prime \prime}\right) \leq \delta\right\}$ the $\LP$ ball, which is compact as $\LP$ is continuous in the weak topology and $\gZ$ is compact \citep{prokhorov1956convergence}.

\citet{dembo2009large} have established that for any set $\gA \subset \gP(\gZ)$ and $\delta>0$ we have for all $n \geq 1$ the upper bound:
$$
\Pr\left(\gD_n \in \gA\right) \leq m_{\LP}(\gA, \delta) \exp \left(-n \inf_{\gD^{\prime} \in \gA^\delta} \KL \left(\gD^{\prime}, \gD\right)\right)
$$
where $m_{\LP}(\gA, \delta)=\min\{ k \geq 0: \exists \gD_1 \cdots \gD_k \in \gA \ \text{s.t.} \bigsqcup_{i=1}^k \sB_{\LP}\left(\gD_i, \delta\right) \supset \gA\}$ is the internal covering number of set $\gA$ with $\LP$ balls of radius $\delta$.
Furthermore, we can upper bound the internal covering number of any set $\gA$ in terms of the internal covering number of the compact event $\gZ$ as $m_{\LP}(\gA, \delta) \leq m_{\LP}( \gP(\gZ), \delta) \leq(4 / \delta)^{m(\gZ, \delta)}$ for all $\delta>0$ \citep{dembo2009large}.
Hence, we have:
$$
\Pr\left(\gD_n \in \gA\right) \leq(4 / \delta)^{m(\gZ, \delta)} \cdot \exp \left(-n \min_{\gD^{\prime} \in \gA^\delta} \KL\left(\gD^{\prime}, \gD\right)\right).
$$

The final inequality immediately leads to conclusion by remarking that $\gD^{\prime} \in \gA^\delta \Rightarrow \KL\left(\gD^{\prime}, \gD\right)>\gamma $.
Otherwise, suppose that there exists $\gD^{\prime \prime} \in \gA^\delta$ with $\KL\left(\gD^{\prime \prime}, \gD\right) \leq \gamma$. By definition of $\gA^\delta$, there exists $\gD^{\prime} \in \gA$ such that $\LP(\gD^{\prime \prime}, \gD^{\prime}) = \LP(\gD^{\prime}, \gD^{\prime \prime}) \le \delta$. And by Proposition \ref{prop: metric}, we have $W_p\left(\gD', \gD''\right) \le (\diam(\gZ) + 1) \LP (\gD', \gD'')^{\frac{1}{p}}=\varepsilon$, so we have both $W_p\left(\gD', \gD''\right) \le \varepsilon$ and $\KL\left(\gD'', \gD\right)\le \gamma$, which implies that $\gD' \in \gA^c$, it is a contradiction.
\end{proof}

\paragraph{Proof of Theorem \ref{thm: gen}}
We restate the theorem below.
Let $\gD_n$ be the observed empirical distribution which may be corrupted, and $\gD$ be the true data distribution. Then for all $\varepsilon >0$, let $\delta = (\frac{\varepsilon}{\diam(\gZ)+1})^p$, we have
    \begin{equation}
        \Pr\Big(\forall \theta\in \Theta,\gL_{\varepsilon, \gamma}(\theta) \ge \E_{\gD}[L(\theta, z)]\Big) \ge 1 - e^{-\gamma n}\left(\frac{4}{\delta}\right)^{m(\gZ, \delta)}
    \end{equation}
    where $m(\gZ, \delta):= \min\{k\ge 0: \exists \xi_1, \cdots, \xi_k \in \gZ, \ \text{s.t.}\ \cup_{i=1}^k \sB(\xi_i, \delta)\supseteq \gZ\}$ denote the internal covering number of the support set $\gZ$.
\begin{proof}
    Recall that the ambiguity set is $\gU(\gD_n) := \{\gD' : \exists \gD'' \in \gP(\gZ)\ \text{s.t.} \ \Wass_p(\gD_n, \gD'') \le \varepsilon, \ \KL(\gD'', \gD')\le \gamma\}$. By Lemma \ref{lemma-ldp} we can ensure the ambiguity set $\gU(\gD_n)$ contains the true distribution $\gD$ with high probability:
    \begin{equation*}
        \Pr\left( \gD \in \gU(\gD_n) \right) \ge 1 - e^{-\gamma n}\left(\frac{4}{\delta}\right)^{m(\gZ, \delta)} .
    \end{equation*}
    It follows that
    \begin{equation*}
        \Pr\left(\sup_{\gD' \in \gU(\gD_n)} \E_{\gD'}[L(\theta, z)] \ge \E_{\gD}[L(\theta, z)], \forall \theta\in \Theta\right) \ge 1 - e^{-\gamma n}\left(\frac{4}{\delta}\right)^{m(\gZ, \delta)}.
    \end{equation*}
\end{proof}


To facilitate our analysis on test adversarial distribution, we introduce the Levy-Prokhorov (LP) metric given by \citet{bennouna2023certified_LP},
$\LP_{\varepsilon}$ is defined as:
\begin{equation}
\label{eq: LP_eps}
\LP_{\varepsilon}\left(\gD, \gD'\right):=
\inf \left\{\int 1\left(\dis(z, z^{\prime}) > \varepsilon \right) \mathrm{d} \pi\left(z, z^{\prime}\right): \pi \in \Pi \left(\gD, \gD'\right)\right\}.
\end{equation}

\paragraph{Proof of the Theorem \ref{thm: robusness}}
We restate the theorem below.
Let $\gD$ be the true data distribution, $\gD_n$ the empirical distribution sampled i.i.d. from $\gD$,
    and $\delta = (\frac{\sigma}{\diam(\gZ)+1})^p$ for any $\sigma>0$. Then
    \begin{equation}
        \Pr\Big(\forall \theta\in \Theta,\E_{\gD}[\max_{z'\in\sB(z;\epsilon)}L(\theta,z')] \le \gL_{\varepsilon+\sigma, \gamma}(\theta, \gD_n)\Big) \ge 1 - e^{-n\gamma}\left(\frac{4}{\delta}\right)^{m(\gZ, \delta)}
    \end{equation}
    where $m(\gZ, \delta):= \min\{k\ge 0: \exists \xi_1, \cdots, \xi_k \in \gZ, \ \text{s.t.}\ \cup_{i=1}^k \sB(\xi_i, \delta)\supseteq \gZ\}$ denote the internal covering number of the support set $\gZ$.
\begin{proof}
    We consider the adversarial test distribution $\gM_\#\gD$, where $\gM$ maps every input $z$ to $\argmax_{z'\in \sB(z;\epsilon)} L(\theta, z')$. It is clear that $\gM_\#\gD \in \{\gD' \in \gP(\gZ): \LP_{\varepsilon}(\gD , \gD')\le 0\}$. According to Lemma \ref{lemma-ldp}, we have that for $\delta = (\frac{\sigma}{\diam(\gZ)+1})^p$:
    \begin{equation*}
        \Pr\Big(\exists \gD' \in \gP(\gZ), \ \Wass_p(\gD_n, \gD') \le \sigma, \ \KL(\gD', \gD)\le \gamma \Big) \ge 1 - e^{-\gamma n}\left(\frac{4}{\delta}\right)^{m(\gZ, \delta)} .
    \end{equation*}
    Let us denote here $\gU'_{\sigma, \gamma, \varepsilon}(\gD_n) := \{\gD_{\test}' : \exists \gD', \gD'' \in \gP(\gZ)\ \text{s.t.} \ \Wass_p(\gD_n, \gD') \le \sigma, \ \KL(\gD', \gD'')\le \gamma, \ \LP_{\varepsilon}(\gD'' , \gD_{\test}')\le 0\}$, by the definition of $\gM_\#\gD$, we have
     \begin{equation*}
        \Pr\left( \gM_\#\gD \in \gU'_{\sigma, r, \varepsilon}(\gD_n) \right) \ge 1 - e^{-\gamma n}\left(\frac{4}{\delta}\right)^{m(\gZ, \delta)} .
    \end{equation*}
    It follows immediately that:
     \begin{equation*}
        \Pr\left(\sup_{\gD_{\test}' \in \gU'_{\sigma, \gamma, \varepsilon}(\gD_n)} \E_{\gD_{\test}'}[L(\theta, z)] \ge \E_{\gM_\#\gD}[L(\theta, z)], \forall \theta\in \Theta\right) \ge 1 - e^{-\gamma n}\left(\frac{4}{\delta}\right)^{m(\gZ, \delta)}.
    \end{equation*}
    And we have \[\E_{\gM_\#\gD}L(\theta,z) = \E_{\gD}[\max_{z'\in\sB(z;\epsilon)}L(\theta,z')].\]
    It remains to show that $\sup_{\gD_{\test}' \in \gU'_{\sigma, \gamma, \varepsilon}(\gD_n)} \E_{\gD_{\test}'}[L(\theta, z)] \le \gL_{\varepsilon+\sigma, \gamma}(\theta, \gD_{n})$ for all $\theta$. We have
    \begin{align*}
        &\sup_{\gD_{\test}' \in \gU'_{\sigma, \gamma, \varepsilon}(\gD_n)} \E_{\gD_{\test}'}[L(\theta, z)] \\
        & = \sup_{\gD':\Wass_p(\gD_n, \gD')\le \sigma} \sup\{\E_{\gD_{\test}'}[L(\theta, z)]: \exists \gD'', \gD_{\test}' \ \textbf{s.t.}\ \KL(\gD', \gD'')\le \gamma, \ \LP_{\varepsilon}(\gD'' , \gD_{\test}')\le 0 \}\\
        & \overset{\text{(1)}}{=}\sup_{\gD':\Wass_p(\gD_n, \gD')\le \sigma} \sup\{\E_{\gD_{\test}'}[L(\theta, z)]: \exists \gD'', \gD_{\test}' \ \textbf{s.t.}\ \LP_{\varepsilon}(\gD' , \gD'')\le 0, \ \KL(\gD'';, \gD_{\test}')\le \gamma \} \\
        & = \sup\{\E_{\gD_{\test}'}[L(\theta, z)]: \exists \gD', \gD'' \ \textbf{s.t.}\ \Wass_p(\gD_n, \gD')\le \sigma, \LP_{\varepsilon}(\gD' , \gD'')\le 0, \ \KL(\gD'';, \gD_{\test}')\le \gamma \}\\
        & \overset{\text{(2)}}{\le}\sup\{\E_{\gD_{\test}'}[L(\theta, z)]: \exists \gD', \gD'' \ \textbf{s.t.}\ \Wass_p(\gD_n, \gD')\le \sigma, \Wass_p(\gD', \gD'')\le \varepsilon, \ \KL(\gD'';, \gD_{\test}')\le \gamma\}\\
        &\overset{\text{(3)}}{\le} \sup\{\E_{\gD_{\test}'}[L(\theta, z)]: \exists \gD' \ \textbf{s.t.}\ \Wass_p(\gD_n, \gD')\le \varepsilon+\sigma, \ \KL(\gD', \gD_{\test}')\le \gamma \} \\
        & \le \gL_{\varepsilon+\sigma, \gamma}(\theta, \gD_{n}).
    \end{align*}
    The first equality follows from Lemma C.3 in \citet{bennouna2023certified_LP}, the second inequality is clear as under $\LP_{\varepsilon}$ constraint we have $\dis(z, z')\le \varepsilon$ for any $z, z'$ almost surely. The third inequality follows from the triangle inequality.
\end{proof}

\subsection{General Distributional Shifts}
\label{App-3}
Our framework is applicable to a variety of distributional shifts, not limited to the adversarial example distributions that we focus on. This includes other shifts such as those encountered in domain generalization. If the distance between the test distribution and the original data distribution remains within certain bounds, we can still provide generalization guarantees.
Here we assume that
$$\gD_{\test} \in \{\gD' \in \gP(\gZ): \LP_{\varepsilon}(\gD , \gD')\le 0\},$$
where $\varepsilon> 0$ is the adversarial budget and $\LP_{\varepsilon}$ is defined in equation \ref{eq: LP_eps}.
Here, we consider a more general scenario where potential shifts might include corruptions or other variations, resulting in discrepancies between test samples and true samples. This formulation allows us to address a broader range of distributional shifts beyond just adversarial examples. Similar to Theorem \ref{thm: robusness}, we have:

\begin{theorem}
    \label{thm: robust-shift}
    Let $\gD$ be the true data distribution, $\gD_n$ the empirical distribution sampled i.i.d. from $\gD$,  $\gD_{\test}$ the test distribution as defined above, and $\delta = (\frac{\sigma}{\diam(\gZ)+1})^p$ for $\sigma>0$. Then
    \begin{equation}
        \Pr\Big(\forall \theta\in \Theta,\E_{\gD_{\test}}[L(\theta, z)] \le \gL_{\varepsilon+\sigma, \gamma}(\theta, \gD_n)\Big) \ge 1 - e^{-n\gamma}\left(\frac{4}{\delta}\right)^{m(\gZ, \delta)}
    \end{equation}
    where $m(\gZ, \delta):= \min\{k\ge 0: \exists \xi_1, \cdots, \xi_k \in \gZ, \ \text{s.t.}\ \cup_{i=1}^k \sB(\xi_i, \delta)\supseteq \gZ\}$ denote the internal covering number of the support set $\gZ$.
\end{theorem}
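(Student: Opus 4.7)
The plan is to mirror the proof of Theorem \ref{thm: robusness} almost verbatim, observing that the only property of the adversarial pushforward $\gM_\#\gD$ that was actually used there was its membership in the set $\{\gD' : \LP_\varepsilon(\gD, \gD') \le 0\}$. Since the hypothesis of the present theorem places $\gD_{\test}$ in precisely that set by assumption, the same chain of inclusions should carry through without modification. No new concentration inequalities are needed; the probabilistic content is again supplied entirely by Lemma \ref{lemma-ldp}.

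Concretely, I would proceed as follows. First, apply Lemma \ref{lemma-ldp} with parameters $\sigma$ and $\gamma$ to obtain that, with probability at least $1 - e^{-n\gamma}(4/\delta)^{m(\gZ,\delta)}$, there exists a distribution $\gD' \in \gP(\gZ)$ satisfying $\Wass_p(\gD_n, \gD') \le \sigma$ and $\KL(\gD', \gD) \le \gamma$. Combine this with the assumed $\LP_\varepsilon(\gD, \gD_{\test}) \le 0$ and then invoke the LP–KL commutation identity (Lemma C.3 of \citet{bennouna2023certified_LP}) to produce an auxiliary $\gD''$ with $\LP_\varepsilon(\gD', \gD'') \le 0$ and $\KL(\gD'', \gD_{\test}) \le \gamma$. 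Since $\LP_\varepsilon(\gD', \gD'') \le 0$ forces $\dis(z,z') \le \varepsilon$ almost surely under some coupling, we get $\Wass_p(\gD', \gD'') \le \varepsilon$, and hence by the triangle inequality for $\Wass_p$,
\begin{equation*}
\Wass_p(\gD_n, \gD'') \le \Wass_p(\gD_n, \gD') + \Wass_p(\gD', \gD'') \le \sigma + \varepsilon.
\end{equation*}

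This exhibits $\gD''$ as a witness certifying $\gD_{\test} \in \gU(\gD_n)$ with adversarial budget $\varepsilon + \sigma$ and statistical budget $\gamma$. Therefore, on this high-probability event, for every $\theta \in \Theta$,
\begin{equation*}
\E_{\gD_{\test}}[L(\theta, z)] \le \sup_{\gD' \in \gU_{\varepsilon+\sigma,\gamma}(\gD_n)} \E_{\gD'}[L(\theta, z)] = \gL_{\varepsilon+\sigma, \gamma}(\theta, \gD_n),
\end{equation*}
which is the desired bound, uniformly over $\theta$.

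The only step that is not entirely mechanical is justifying the swap between the $\KL$ and $\LP_\varepsilon$ constraints, which is the same technical obstacle encountered in the proof of Theorem \ref{thm: robusness}; once the cited lemma is applied, the remainder is just the triangle inequality and unpacking the definition of $\gU(\gD_n)$. Everything else is bookkeeping that tracks how the $\sigma$ budget from Lemma \ref{lemma-ldp} combines with the $\varepsilon$ tolerance from the $\LP_\varepsilon$ constraint to yield the effective Wasserstein radius $\varepsilon + \sigma$ needed in the ambiguity set.
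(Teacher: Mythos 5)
Your proposal is correct and takes essentially the same route as the paper. The paper in fact gives no separate proof of this theorem, stating only that it follows ``Similar to Theorem~\ref{thm: robusness}''; you correctly identify that the only property of $\gM_\#\gD$ used in the proof of Theorem~\ref{thm: robusness} is its membership in $\{\gD' : \LP_\varepsilon(\gD,\gD')\le 0\}$, which is exactly the hypothesis on $\gD_{\test}$ here, so the chain of reductions (Lemma~\ref{lemma-ldp}, the $\LP$--$\KL$ commutation of Lemma~C.3 in \citet{bennouna2023certified_LP}, $\LP_\varepsilon \le 0 \Rightarrow \Wass_p \le \varepsilon$, and the $\Wass_p$ triangle inequality) carries over verbatim. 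One small caveat worth flagging: you phrase the commutation step as producing a concrete witness $\gD''$, i.e.\ as a distribution-level swap, whereas the paper invokes Lemma~C.3 only as an equality of suprema over the two nested ambiguity sets; if the cited lemma is stated only at the supremum level, your membership claim $\gD_{\test}\in\gU_{\varepsilon+\sigma,\gamma}(\gD_n)$ is slightly stronger than what is literally quoted, but the desired inequality $\E_{\gD_{\test}}[L(\theta,z)]\le\gL_{\varepsilon+\sigma,\gamma}(\theta,\gD_n)$ still follows by running the same chain at the sup level, exactly as in the paper's proof of Theorem~\ref{thm: robusness}.
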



\subsection{Proofs for Section \ref{sec-game}}
\label{app: nash}
In this subsection, we first give the definition of Nash equilibrium and Stackelberg equilibrium, then we provide the detailed proof of the existence of two game equilibrium.

Here we consider two player zero sum game, let the strategy spaces of player $1$ and player $2$ be $X$ and $ Y$, respectively. Their utility functions are denoted by $ u_1(x, y)$ and $ u_2(x, y)$, where $ x \in X$ is the strategy of player 1, and $ y \in Y$ is the strategy of player 2. In a zero-sum game, $ u_1(x, y) = -u_2(x, y)$.

\begin{defi}[Nash Equilibrium]
    A Nash equilibrium is a pair of strategies $ (x^*, y^*)$ such that neither player can improve their payoff by unilaterally changing their strategy, given the strategy of the other player.
    \[
    u_1(x^*, y^*) \geq u_1(x, y^*), \quad \forall x \in X; \quad
    u_1(x^*, y^*) \leq u_1(x^*, y), \quad \forall y \in Y.
    \]
\end{defi}

In a Stackelberg game, one player (the leader) moves first, and the other player (the follower) observes the leader's action before selecting their own strategy. Let player $1$ be the leader and player $2$ be the follower.

\begin{defi}[Stackelberg Equilibrium]
    A Stackelberg equilibrium is a pair of strategies $ (x^*, y^*(x^*))$, where $ x^*$ is the leader's optimal strategy and $ y^*(x^*)$ is the follower's best response given the leader's strategy.

    1. Follower's best response: $y^*(x) = \arg \max_{y \in Y} u_2(x, y)$

    2. Leader's optimal strategy: $x^* = \arg \max_{x \in X} u_1(x, y^*(x))$.
\end{defi}

Thus, the Stackelberg equilibrium consists of the leader's strategy $ x^*$, which maximizes the leader's payoff, and the follower's response $ y^*(x^*)$, which is the best response to $ x^*$.

\paragraph{Proof of Proposition \ref{prop: compact}:}
(Restate)    The ambiguity set ${\gU}(\gD_n) := \{\gD'\in \gP(\gZ) : \exists \gD'' \in \gP(\gZ)\ \text{s.t.} \ \Wass_p(\gD_n, \gD'') \le \varepsilon, \ \KL(\gD'', \gD')\le \gamma\}$ is compact in $\gP(\gZ)$ with respect to weak topology. And if the loss function satisfies Assumption \ref{ass: loss}, then the supremum in (\ref{eq: sup loss}) is finite and can be attained for any $\theta\in \Theta$.
\begin{proof}
    Firstly, we prove ${\gU}(\gD_n)$ is closed.
    Assume that any sequence $\{\gD^i\}_{i=1}^\infty \subset {\gU}(\gD_n)$ weakly converges to $\gD^0$.
    We will show that $\gD^0\in {\gU}(\gD_n)$, which is equivalent to ${\gU}(\gD_n)$ is closed.  By definition of $\gU(\gD_n)$, for each $i$, there exists $\widehat{\gD^i}$ such that $\Wass_p(\gD_n, \widehat{\gD^i}) \le \varepsilon, \ \KL(\widehat{\gD^i}, \gD^i)\le \gamma$.
    We aim to show that such $\widehat{\gD^0}$ also exist for the limit $\gD^0$, preserving the Wasserstein and KL constraints.

    The set $\{\gD'' \in \gP(\gZ) : \Wass_p(\gD_n, \gD'') \le \varepsilon\}$ is compact in the weak topology of $\gP(\gZ)$, as the Wasserstein ball is compact with respect to weak convergence. Therefore, from the sequence $\widehat{\gD^i}$, we can extract a subsequence $\{\widehat{\gD^{i_j}}\}_{j=1}^{\infty}$ that converges weakly to some distribution $\widehat{\gD^0}$, and by the compactness of the Wasserstein ball, $\widehat{\gD^0}$ satisfies:
    $\Wass_p(\gD_n, \widehat{\gD^0}) \le \varepsilon$. Here, we abuse the notation and still denote the subsequence as $\{\widehat{\gD^{i}}\}_{i=1}^{\infty}$.

    Next, we use the lower semicontinuity property of the Kullback-Leibler (KL) divergence with respect to weak convergence. Since $\KL(\widehat{\gD^{i}}, \gD^i) \le \gamma$ for all $i$, and $\widehat{\gD^i} \to \widehat{\gD^0}$ weakly and $\gD^i \to \gD^0$ weakly, we apply the lower semicontinuity of the KL divergence to conclude:
    \[\KL(\widehat{\gD}^0, \gD^0) \le \liminf_{i \to \infty} \KL(\widehat{\gD}^i, \gD^i) \le \gamma.\]
    Thus, the limit point $\gD^0$ of the sequence $\{\gD^i\}_{i=1}^{\infty}$ satisfies the conditions:\[\exists \widehat{\gD^0} \in \gP(\gZ) \ \text{s.t.} \ \Wass_p(\gD_n, \widehat{\gD}^0) \le \varepsilon \quad \text{and} \quad \KL(\widehat{\gD}^0, \gD^0) \le \gamma,\]which implies that $\gD^0 \in \gU(\gD_n)$. Therefore, $\gU(\gD_n)$ is closed with respect to the weak topology.

    Then we prove that ${\gU}(\gD_n)$ is subset of a larger Wasserstein ball, as any weak closed subset of a weakly compact set is weakly compact, we can prove ${\gU}(\gD_n)$ is weak compact. For any $\gD' \in {\gU}(\gD_n)$, there exists $\gD'' \in \gP(\gZ)$ such that $\Wass_p(\gD_n, \gD'') \le \varepsilon$ and $\KL(\gD'', \gD')\le \gamma$. By Proposition \ref{prop: wp-KL}, we have $\Wass_p(\gD'', \gD') \le \diam(\gZ)(\gamma/2)^{\frac{1}{2p}}$, then we have that $\Wass_p(\gD_n, \gD'') \le \varepsilon$ and $\Wass_p(\gD'', \gD') \le\diam(\gZ)(\gamma/2)^{\frac{1}{2p}}$, thus $\Wass_p(\gD_n, \gD') \le \varepsilon + \diam(\gZ)(\gamma/2)^{\frac{1}{2p}}$, which means that ${\gU}(\gD_n)\subset \sB_{W_p}\left(\gD_n, \ \varepsilon + \gamma \cdot \diam(\gZ)(\gamma/2)^{\frac{1}{2p}}\right)$.

    Next, we show the supremum in (\ref{eq: sup loss}) is finite. As ${\gU}(\gD_n)\subset \sB_{W_p}\left(\gD_n, \ \varepsilon + \gamma a \cdot d_{\min}^p(\gZ)\right)$, it is clear that:
    $$ \sup_{\gD' \in \gU(\gD_n)} \E_{\gD'}[L(\theta, z)] \le  \sup_{\gD' \in \sB_{W_p}\left(\gD_n, \ \varepsilon + \gamma a \cdot d_{\min}^p(\gZ)\right)} \E_{\gD'}[L(\theta, z)].$$
    The supremum on the right side of the above inequality is finite by \citet[Theorem 2]{yue2022linear-wass}, which applies Assumption \ref{ass: loss} (iii) and $\gD_n$ has a finite $p$-th moment. Thus the supremum have a finite upper bound.

    It remains to be shown that supremum in (\ref{eq: sup loss}) can be attained. We know that the objective function $\sup_{\gD' \in \gU(\gD_n)} \E_{\gD'}[L(\theta, z)]$ is weak upper semi-continuous in $\gD'$ over the ambiguity set ${\gU}(\gD_n)$. This follows immediately from the proof of \citet[Theorem 3]{yue2022linear-wass}, which reveals that $\sup_{\gD' \in \gU(\gD_n)} \E_{\gD'}[L(\theta, z)]$ is weak upper semi-continuous over $\sB_{W_p}\left(\gD_n, \ \varepsilon + \gamma a \cdot d_{\min}^p(\gZ)\right)$. As
    ${\gU}(\gD_n)$ is weak compact, Weirestrass's theorem the guarantees the supremum in (\ref{eq: sup loss}) is indeed attained for any $\theta$.
\end{proof}

\paragraph{Proof of the Stackelberg Equilibrium (Theorem \ref{thm-st})}
(Restate)
If the loss function satisfies Assumption \ref{ass: loss} and we assume $\Theta$ is compact, the game exists a Stackelberg equilibrium. i.e. there exists
$$\theta^* \in \arginf_{\theta\in \Theta} \sup_{\gD' \in \gU(\gD_n)} \E_{\gD'}[L(\theta, z)], \quad \gD'^*(\theta^*) \in \argsup_{\gD' \in \gU(\gD_n)} \E_{\gD'}[L(\theta^*, z)].$$
\begin{proof}
    By Proposition \ref{prop: compact}, for any $\theta$, the supremum in \ref{eq: sup loss} can be attained, that is, there exists a best response $\gD'^*(\theta)\in \argmax_{\gD' \in \gU(\gD_n)} \E_{\gD'}[L(\theta^*, z)]$ for any $\theta$, for simplicity we can assume $\gD'^*(\theta)$ is unique since the supremum is unique. We consider the minimization problem: $\inf_{\theta\in \Theta} \E_{\gD'^*(\theta)}[L(\theta, z)]$.
    Furthermore, the expected loss objective $\mathbb{E}_{Z \sim \gD'}[L(\theta, z)]$ inherits lower-semicontinuity in $\theta$ from the loss function $L(\theta, z)$. Specifically, lower semi-continuity holds because
    $$
    \liminf _{\theta_n \rightarrow \theta} \mathbb{E}_{Z \sim \gD'}\left[L\left(\theta_n, Z\right)\right] \geq \mathbb{E}_{Z \sim \gD'}\liminf _{\theta_n \rightarrow \theta} [L(\theta, z)] \geq \mathbb{E}_{Z \sim \gD'}[L(\theta, z)],
    $$
    then we have $$\liminf _{\theta_n \rightarrow \theta} \mathbb{E}_{\gD'^*(\theta)}\left[L\left(\theta_n, Z\right)\right] \geq \mathbb{E}_{\gD'^*(\theta)}[L(\theta, z)]$$
    and by definition of best response:
    $$\liminf _{\theta_n \rightarrow \theta} \mathbb{E}_{\gD'^*(\theta_n)}\left[L\left(\theta_n, Z\right)\right] \geq \liminf _{\theta_n \rightarrow \theta} \mathbb{E}_{\gD'^*(\theta)}\left[L\left(\theta_n, Z\right)\right],$$
    thus we have
    $$\liminf _{\theta_n \rightarrow \theta} \mathbb{E}_{\gD'^*(\theta_n)}\left[L\left(\theta_n, Z\right)\right] \geq \mathbb{E}_{\gD'^*(\theta)}[L(\theta, z)],$$
    which means the function $\inf_{\theta\in \Theta} \E_{\gD'^*(\theta)}[L(\theta, z)]$ is lower-semi continuous on $\theta$. Finally, since $\Theta$ is compact, the minimization problem $\inf_{\theta\in \Theta} \E_{\gD'^*(\theta)}[L(\theta, z)]$ has a solution $\theta^*$.
\end{proof}

\paragraph{Proof of the Minimax Theorem \ref{thm: minimax}}

(Restate) If the loss function satisfies Assumption \ref{ass: loss} holds, $\Theta$ is convex, and $L(\theta, z)$ is convex in $\theta$ for any $z\in \gZ$, then we have
\begin{equation}
         \min_{\theta\in \Theta} \max_{\gD' \in \gU(\gD_n)} \E_{\gD'}[L(\theta, z)] = \max_{\gD' \in \gU(\gD_n)} \min_{\theta\in \Theta}  \E_{\gD'}[L(\theta, z)].
    \end{equation}

\begin{proof}
    We will verify the conditions of Sion's minimax theorem \citep{sion1958minimax}. First, we show that the ambiguity set $\gU(\gD_n)$ is convex. Assume that $\gD^1, \gD^2 \in \gU(\gD_n)$. Then there exist $\widehat{\gD}^i(i=1, 2)\in \gP(\gZ)$ such that $\ \Wass_p(\gD_n, \widehat{\gD^i}) \le \varepsilon, \ \KL(\widehat{\gD^i}, \gD^i)\le \gamma$. As KL-divergence is convex, for any $\lambda \in [0, 1]$, we have
    $$\KL(\lambda \widehat{\gD^1} + (1-\lambda)\widehat{\gD^2},  \lambda {\gD^1} + (1-\lambda){\gD^2}) \le \lambda \KL(\widehat{\gD^1}, \gD^1) + (1-\lambda) \KL(\widehat{\gD^2}, \gD^2) \le \gamma.$$ And for Wasserstein we have
    $$\Wass_p(\gD_n, \lambda \widehat{\gD^1} + (1-\lambda)\widehat{\gD^2}) \le  \lambda \Wass_p(\gD_n, \widehat{\gD^1}) +(1-\lambda) \Wass_p(\gD_n, \widehat{\gD^2}) \le \varepsilon.$$ Let $\gD'' = \lambda \widehat{\gD^1} + (1-\lambda)\widehat{\gD^2} $, which means $ \lambda {\gD^1} + (1-\lambda){\gD^2}) \in \gU(\gD_n)$. Now we have $\Theta$ is convex and $\gU(\gD_n)$ is convex and weak compact by proposition \ref{prop: compact}.

    Furthermore, the expected loss objective $\mathbb{E}_{Z \sim \gD'}[L(\theta, z)]$ inherits convexity and lower-semicontinuity in $\theta$ from the loss function $L(\theta, z)$. Specifically, lower semi-continuity holds because
    $$
    \liminf _{\theta_n \rightarrow \theta} \mathbb{E}_{Z \sim \gD'}\left[L\left(\theta_n, Z\right)\right] \geq \mathbb{E}_{Z \sim \gD'}\liminf _{\theta_n \rightarrow \theta} [L(\theta, z)] \geq \mathbb{E}_{Z \sim \gD'}[L(\theta, z)].
    $$
    These inequalities leverage two key properties. The first inequality follows from Fatou's lemma for random variables with uniformly integrable negative parts, a property guaranteed by Assumption \ref{ass: loss} (i) that loss function is non-negative. The second inequality exploits the lower semi-continuity of the loss function $L(\theta, \cdot)$ in $\theta$, as established in Assumption \ref{ass: loss} (ii).

    Finally, it is readily verified that the objective function $\mathbb{E}_{Z \sim \gD'}[L(\theta, z)]$ is concave (in fact, linear) and weakly upper semi-continuous in $\gD'$. This follows directly from the proof of Proposition \ref{prop: compact}.
    This analysis establishes that all the conditions of Sion's minimax theorem are satisfied. Consequently, the infimum and supremum in Equation (\ref{eq: minimax}) can be interchanged.

    It remains to show that both maxima in (\ref{eq: minimax}) are reached. However, this follows immediately from the weak compactness of $\gU(\gD_n)$ and the weak upper semi-continuity in $\gD'$ of both the expected loss $\mathbb{E}_{Z \sim \gD'}[L(\theta, z)]$ and the optimal expected loss $\inf _{\theta \in \Theta} \mathbb{E}_{Z \sim \gD'}[L(\theta, z)]$.
\end{proof}


\subsection{Proof of Proposition \ref{prop: strong-dual}}
In this subsection, we prove the dual formulation of the robust maximization problem.

\paragraph{Proof of Strong Duality (Proposition \ref{prop: strong-dual})} (Restate)
$\gL_{\varepsilon, r}(\theta, \gD)$ admits the dual formulation for all $\gamma>0$:
\begin{equation*}
        \gL_{\varepsilon, \gamma}(\theta, \gD)= \inf\limits_{\substack{\lambda, \beta \ge 0\\ \eta\ge \max\limits_{z\in \gZ}L(\theta, z)}} \{ \lambda \varepsilon^p + \E_{\gD} [\varphi(\lambda, \beta, \eta, z)]\}
\end{equation*}
where $\varphi(\lambda, \beta, \eta, z) = \sup\limits_{\xi\in \gZ}\{\beta \log\frac{\beta}{\eta - L(\theta, \xi)} + (\gamma-1)\beta + \eta - \lambda \dis^p(z, \xi)\}$.

\begin{proof}
\begin{equation*}
    \begin{aligned}
& L_{\varepsilon,\gamma}\left(\theta, \gD\right) \\
&=:  \sup \left\{  \mathbb{E}_{\gD^{\prime}}[L(\theta, z)]: D^{\prime}, \gQ \in \gP(\gZ), W_p\left(\gD, \gQ\right)\le \varepsilon, \KL\left(\gQ, \gD^{\prime}\right) \le \gamma   \right\} \\
&=  \sup \left\{ \sup \left\{\mathbb{E}_{\gD'} \left[L(\theta, z)\right]: \gD^{\prime} \in \gP(\gZ), \KL\left(\gQ, \gD^{\prime}\right) \leq \gamma\right\}: \gQ \in \gP(\gZ), \Wass_p(\gD, \gQ) \le \varepsilon \right\} \\
&\overset{\text{(1)}}{=}\sup\limits_{\substack{\gQ \in \gP(\gZ),\\\Wass_p(\gD, \gQ) \le \varepsilon}}\left\{\inf\limits_{\substack{\beta \ge 0 \\ \eta\ge \max\limits_{z\in \gZ}L(\theta, z)}} \left\{\int \beta \cdot \log \frac{\beta}{\eta-L(\theta, z)} d\gQ(z)+(\gamma-1)\beta+\eta \right\}\right\} \\
&\overset{\text{(2)}}{=} \lim_{\tau \rightarrow 0_{+}} \sup\limits_{\substack{\gQ \in \gP(\gZ),\\\Wass_p(\gD, \gQ) \le \varepsilon}}  \left\{ \inf_{\substack{\beta \ge 0\\\eta\ge \max\limits_{z\in \gZ}L(\theta, z)} +\tau}\left\{\int \beta \cdot \log \frac{\beta}{\eta-L(\theta,z)} d\gQ(z)+(\gamma-1)\beta+\eta\right\} \right\} \\
&\overset{\text{(3)}}{=} \lim_{\tau \rightarrow 0_{+}}  \left\{  \inf_{\substack{\beta \ge 0\\\eta\ge \max\limits_{z\in \gZ}L(\theta, z) +\tau}} \left\{\sup\limits_{\substack{\gQ \in \gP(\gZ),\\\Wass_p(\gD, \gQ) \le \varepsilon}}\int \beta \cdot \log \frac{\beta}{\eta-L(\theta,z)} d\gQ(z)+(\gamma-1)\beta+\eta\right\} \right\} \\
&=\lim_{\tau \rightarrow 0_{+}}  \left\{  \inf_{\substack{\beta \ge 0\\\eta\ge \max\limits_{z\in \gZ}L(\theta, z) +\tau}} \left\{\sup\limits_{\substack{\gQ \in \gP(\gZ),\\\Wass_p(\gD, \gQ) \le \varepsilon}}\E_{\gQ}\left[\beta \cdot \log \frac{\beta}{\eta-L(\theta,z)}+(\gamma-1)\beta+\eta \right]\right\} \right\} \\
&\overset{\text{(4)}}{=}\lim_{\tau \rightarrow 0_{+}}  \left\{  \inf_{\substack{\beta \ge 0\\\eta\ge \max\limits_{z\in \gZ}L(\theta, z) +\tau}} \left\{ \inf\limits_{\lambda\ge 0} \lambda\varepsilon^p + \E_{\gD}\left[ \varphi(\lambda, \beta, \eta, z)\right]\right\}\right\} \\
&= \inf_{\substack{\beta \ge 0, \lambda\ge 0\\\eta\ge \max\limits_{z\in \gZ}L(\theta, z)}} \lambda\varepsilon^p + \E_{\gD}\left[ \varphi(\lambda, \beta, \eta, z)\right].
\end{aligned}
\end{equation*}
Here, the equality (1)  follows from Lemma \ref{lemma:KL-dual}. The  equality (2) follows from the fact that
\begin{equation*}
\begin{aligned}
& \inf \left\{\int \beta \log \left(\frac{\beta}{\eta-L(\theta, z)}\right) \mathrm{d} \gQ(z)+(\gamma-1) \beta+\eta: \beta \geq 0, \eta \geq \max _{z \in \gZ} L(\theta, z)\right\} \\
\leq & \inf \left\{\int \beta \log \left(\frac{\beta}{\eta-L(\theta, z)}\right) \mathrm{d} \gQ(z)+(\gamma-1)\beta+\eta: \beta \geq 0, \eta \geq \max _{z \in \gZ} L(\theta, z)+\tau\right\} \\
\leq & \inf \left\{\int \beta \log \left(\frac{\beta}{\eta-L(\theta, z)}\right) \mathrm{d} \gQ(z)+(\gamma-1) \beta+\eta: \beta \geq 0, \eta \geq \max _{z \in \gZ} L(\theta, z)\right\}+\tau .
\end{aligned}
\end{equation*}
for any $\tau>0$.
The equality (3) follows from the minimax theorem of \citep{sion1958minimax}, and we refer to (Theorem 3.4's proof \citep{bennouna2022holistic}) for the proof of this part. Finally, equality (4) follows from the strong duality of Wasserstein strong duality.
\end{proof}

\begin{lemma}
\label{lemma:KL-dual}
\citep{van2021KL-DRO} We have
\begin{equation*}
     \sup\limits_{\substack{\gQ \in \gP(\gZ), \\ \KL(\gD', \gQ) \le \gamma}} \mathbb{E}_{\gD'} \left[L(\theta, z)\right] =\inf\limits_{\substack{\beta \geq 0,\\ \eta \geq \max\limits_{\xi \in \gZ} L(\theta, \xi)}} \left\{\int \beta \log \left(\frac{\beta}{\eta-L(\theta, \xi)}\right) \mathrm{d\gD'}(\xi)+(\gamma-1) \beta+\eta \right\}
\end{equation*}
for all $\gD' \in \gP(\gZ)$ and $\gamma>0$.
\end{lemma}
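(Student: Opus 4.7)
The plan is to derive this identity via Lagrangian duality applied to the primal KL-constrained maximization over $\gD'$, in the style of \citet{van2021KL-DRO}. The primal maximizes the linear functional $\gD' \mapsto \E_{\gD'}[L(\theta,\cdot)]$ over the convex set $\{\gD' \in \gP(\gZ) : \KL(\gQ, \gD') \le \gamma\}$, and joint convexity of $\KL$ together with weak compactness of the ambiguity set provides the regularity needed for strong duality.

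First I would parametrize $\gD'$ through its Radon--Nikodym density $\rho = d\gD'/d\nu$ with respect to a common dominating measure $\nu$, writing $q = d\gQ/d\nu$. The problem becomes maximizing $\int L\rho \, d\nu$ subject to $\rho \ge 0$, $\int \rho \, d\nu = 1$ and $\int q \log(q/\rho)\, d\nu \le \gamma$. Introducing multipliers $\beta \ge 0$ for the KL constraint and $\eta \in \R$ for the normalization, I would form the Lagrangian $\int[L\rho - \beta q \log(q/\rho) - \eta \rho]\,d\nu + \eta + \beta\gamma$ and swap the inner sup over $\rho$ with the outer inf over $(\beta,\eta)$. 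The swap is justified by Sion's minimax theorem: the domain of $\rho$ is convex, the Lagrangian is concave in $\rho$ and convex in $(\beta,\eta)$, and lower semicontinuity of $\KL$ in the weak topology supplies the necessary compactness argument.

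Second, solve the unconstrained inner maximization pointwise via the first-order condition $L + \beta q/\rho - \eta = 0$, which yields $\rho^\star = \beta q/(\eta - L)$. This optimizer is non-negative precisely when $\eta \ge L(\xi)$ at every $\xi$ with $q(\xi) > 0$, which is how the constraint $\eta \ge \max_\xi L(\theta,\xi)$ enters the dual. Substituting $\rho^\star$ back, the combination $L\rho^\star - \eta\rho^\star$ simplifies to $-\beta q$ (since $\rho^\star(L-\eta) = -\beta q$), while $-\beta q \log(q/\rho^\star)$ collapses to $\beta q \log(\beta/(\eta - L))$ (since $q/\rho^\star = (\eta - L)/\beta$); integrating against $\nu$ and adding the constants $\eta + \beta\gamma$ delivers the claimed dual objective $\eta + (\gamma-1)\beta + \int \beta \log(\beta/(\eta - L(\theta,\xi)))\, d\gQ(\xi)$.

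The main obstacle will be handling the boundary cases cleanly, in particular the degeneracy at $\beta = 0$ and at $\eta = \max_\xi L(\theta,\xi)$. For $\beta \downarrow 0$ one has $\beta \log(\beta/(\eta - L)) \to 0$ and the dual collapses to $\inf_{\eta \ge \max L} \eta = \max_\xi L(\theta,\xi)$, which must coincide with the primal value in the regime where the KL constraint is inactive (for instance when $\gamma$ is large). One must also verify that $\rho^\star$ integrates to one at the optimal $(\beta^\star,\eta^\star)$ so that it is genuinely a probability density, and treat the case where the essential supremum of $L$ is not attained on $\mathrm{supp}(\gQ)$ via a limiting sequence $\eta \downarrow \max L$. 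Apart from these delicate but standard KL-DRO considerations, and the verification of the minimax swap, the computation is a routine KKT exercise for a linear-over-convex program.
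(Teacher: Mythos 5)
The paper cites Van Parys et al.~(2021) for this lemma and gives no proof of its own, so there is nothing in the paper to compare against; I evaluate the attempt on its merits. Your overall strategy is the standard KL-DRO duality derivation and the algebra is correct: the stationary point $\rho^\star=\beta q/(\eta-L)$, the simplification $\rho^\star(L-\eta)=-\beta q$, and the collapse of the entropy term to $\beta q\log\bigl(\beta/(\eta-L)\bigr)$ reproduce the stated dual objective exactly. You also implicitly read the lemma in the form it is actually used in the proof of Proposition~\ref{prop: strong-dual}, thereby correcting the evident typo in the statement: the objective should be $\E_{\gQ}[L(\theta,z)]$ rather than $\E_{\gD'}[L(\theta,z)]$, since as printed the supremum is taken over $\gQ$ yet the objective does not depend on $\gQ$.

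The one genuine gap is the justification of the minimax swap. Sion's theorem requires one of the two domains to be compact, but in your Lagrangian formulation the outer variable ranges over all non-negative densities and the multipliers $(\beta,\eta)$ over an unbounded cone, so neither side is compact; asserting that lower semicontinuity of $\KL$ in the weak topology ``supplies the necessary compactness argument'' does not close this for the Lagrangian game. The cleaner and standard justification is Lagrangian strong duality via Slater's condition: the KL ball $\{\gD':\KL(\gQ,\gD')\le\gamma\}$ contains the strictly feasible point $\gD'=\gQ$ for every $\gamma>0$, the objective is linear, the constraint is convex, and $\gP(\gZ)$ is weakly compact since $\gZ$ is compact, so strong duality holds without any minimax theorem. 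A secondary imprecision: you tie the requirement $\eta\ge L(\xi)$ only to points where $q(\xi)>0$, but the dual constraint $\eta\ge\max_{\xi\in\gZ}L(\theta,\xi)$ actually comes from demanding finiteness of the pointwise supremum over $\rho\ge 0$ everywhere --- if $L(\xi)>\eta$ at any $\xi$, letting $\rho(\xi)\to\infty$ sends the Lagrangian to $+\infty$ regardless of $q(\xi)$. These are fixable technicalities; the core computation is sound.
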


\begin{lemma}
\label{lemma:wdro-dual}
\citep{blanchet2019wdro-dual}
For all $\gD^{\prime}$ we have:
\begin{equation*}
     \sup\limits_{\substack{\gQ \in \gP(\gZ),\\\Wass_p(\gQ, \gD') \le \varepsilon}} \mathbb{E}_{\gQ} \left[L(\theta, z)\right] =\inf\limits_{\lambda \geq 0 } \left\{ \lambda \varepsilon^p + \E_{\gD'}[\varphi(\lambda, z)]\right\}
\end{equation*}
where $\varphi(\lambda, z) := \sup\limits_{\xi\in \gZ}\{L(\theta, \xi) - \lambda \dis^p(z, \xi)\}$.
\end{lemma}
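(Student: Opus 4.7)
The plan is to establish this Wasserstein strong duality by lifting the primal to an optimization over transport plans, dualizing the budget constraint via Lagrangian duality, and then closing the gap by a measurable-selection argument.

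First, I would rewrite the primal in terms of couplings. Since any admissible $\gQ$ is the second marginal of some $\pi$ on $\gZ\times\gZ$ with first marginal $\gD'$ and $\int \dis^p(z,z')\,d\pi(z,z') \le \varepsilon^p$, the primal equals
\begin{equation*}
\sup\Big\{\E_\pi[L(\theta, z')] : \pi\in\gP(\gZ\times \gZ),\ \text{marg}_1(\pi)=\gD',\ \E_\pi[\dis^p(z,z')]\le \varepsilon^p\Big\}.
\end{equation*}
Attaching a multiplier $\lambda\ge 0$ to the budget constraint and swapping the inf and sup gives, by weak duality, the upper bound
\begin{equation*}
\inf_{\lambda\ge 0}\Big\{\lambda\varepsilon^p + \sup_{\pi:\,\text{marg}_1(\pi)=\gD'} \E_\pi\big[L(\theta,z')-\lambda \dis^p(z,z')\big]\Big\}.
\end{equation*}
For each fixed $\lambda$ the inner supremum disintegrates over the conditional kernels $\pi(\cdot\mid z)$, and concentrating each conditional at a near-maximizer of $L(\theta,\cdot)-\lambda \dis^p(z,\cdot)$ collapses the inner sup to the pointwise value $\varphi(\lambda,z)$. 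Integrating against $\gD'$ recovers exactly the right-hand side of the lemma.

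The main obstacle is the reverse inequality. I would follow the argument of \citet{blanchet2019wdro-dual}: for every $\lambda\ge 0$ and $\delta>0$, a measurable-selection theorem of Kuratowski--Ryll-Nardzewski type (applicable because $L(\theta,\cdot)$ is upper semi-continuous and $\gZ$ is Polish) produces a Borel map $T_\lambda:\gZ\to\gZ$ with $L(\theta,T_\lambda(z))-\lambda\dis^p(z,T_\lambda(z)) \ge \varphi(\lambda,z)-\delta$ for $\gD'$-a.e.\ $z$. The transport plan $\pi_\lambda := (\mathrm{id},T_\lambda)_\#\gD'$ has first marginal $\gD'$ and yields $\E_{\pi_\lambda}[L(\theta,z')] \ge \E_{\gD'}[\varphi(\lambda,z)]-\delta$. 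Taking $\lambda^*$ to be a dual minimizer, I would verify by complementary slackness that $\E_{\pi_{\lambda^*}}[\dis^p(z,z')] = \varepsilon^p$ so that $\pi_{\lambda^*}$ is primal-feasible and attains the dual value as $\delta\to 0$. Two boundary cases need separate treatment: if the unconstrained maximizer of $L(\theta,\cdot)$ is already realized within budget, $\lambda^*=0$ works trivially, while as $\lambda\to\infty$ the penalty pins transport onto the diagonal so that the infimum remains finite. Compactness of $\gZ$ together with Assumption \ref{ass: loss}~(iii) keeps $\varphi(\lambda,z)$ finite for all $\lambda\ge 0$ and $z\in\gZ$, preventing pathological duality gaps and completing the proof.
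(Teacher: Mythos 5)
The paper does not prove this lemma at all---it is imported directly by citation from \citet{blanchet2019wdro-dual}, so there is no in-paper argument to compare your reconstruction against. What you write is a sensible sketch of the Blanchet--Murthy duality argument (lift to couplings, dualize the $\Wass_p$-budget constraint, recognize the inner sup as $\E_{\gD'}[\varphi(\lambda,\cdot)]$ via disintegration, close the gap by measurable selection), and the weak-duality direction plus the pointwise computation of the inner supremum are both correct.

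The gap is in the step ``taking $\lambda^*$ to be a dual minimizer, I would verify by complementary slackness that $\E_{\pi_{\lambda^*}}[\dis^p(z,z')]=\varepsilon^p$.'' The selection $T_{\lambda^*}$ is chosen only to be a pointwise near-maximizer of $\xi\mapsto L(\theta,\xi)-\lambda^*\dis^p(z,\xi)$, and nothing forces the induced deterministic plan $(\mathrm{id},T_{\lambda^*})_\#\gD'$ to hit the budget exactly; if its transport cost exceeds $\varepsilon^p$ the plan is infeasible, and if it falls short of $\varepsilon^p$ with $\lambda^*>0$ the $\lambda^*(\varepsilon^p-\E_{\pi}[\dis^p])$ term does not vanish and your chain of inequalities does not close. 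The correct mechanism is a subdifferential/mixing argument over \emph{stochastic} kernels: the dual objective $g(\lambda)=\lambda\varepsilon^p+\E_{\gD'}[\varphi(\lambda,z)]$ is convex, its subdifferential at $\lambda^*$ is the interval $\{\varepsilon^p-\E_\pi[\dis^p]:\pi \text{ supported on the argmax correspondence}\}$ (the set of achievable costs is an interval because the feasible plans form a convex set on which $\E_\pi[\dis^p]$ is linear), and $0\in\partial g(\lambda^*)$ therefore yields some---generally randomized---kernel $\pi^*$ with $\E_{\pi^*}[\dis^p]=\varepsilon^p$. You should allow conditional distributions $\pi(\cdot\mid z)$ rather than deterministic maps, and replace ``complementary slackness'' with this intermediate-value argument on the subdifferential. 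The boundary cases you flag ($\lambda^*=0$ and $\lambda\to\infty$) are handled correctly, and compactness of $\gZ$ plus upper semi-continuity of $L(\theta,\cdot)$ do guarantee attainment of the dual infimum, so with that one repair the proof would be sound.
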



\subsection{Further details on experiments}
\label{app: exp}
We use the ResNet-18 for
the CIFAR-10 and CIFAR-100 dataset. We use the SGD optimizer with momentum 0.9, weight decay
5e-4. The starting learning rate is 0.1 and
reduce the learning rate $(\times 0.1)$ at epoch $\{100, 150\}$. We train with 200 epochs.

\paragraph{Mitigating overfitting on more attack budgets.} We would like to provide further experimental results on the CIFAR-10 dataset on more attack budgets (training and test attack budget is same) as shown in Table \ref{tab: cifar10_large}.
The gap between these final robust accuracy and best robust accuracy is the smallest in our method, indicating that our approach most effectively mitigates overfitting even for different attack budgets.

\begin{table}[ht]
\caption{Robust performance of different methods on CIFAR-10using a ResNet-18 for $l_\infty$ with  budget 6/255, 10/255 and 12/255.
    We use PGD-10 as the attack to evaluate robust performance.
    Nat is the natural test accuracy.
    The ``Best" robust test acc is the highest robust test accuracy achieved during training whereas the ``Final" robust test acc is last epoch's robust accuracy.
    }
    \label{tab: cifar10_large}
    \vspace{0.5em}
    \centering
    \begin{tabular}{ccccc}
    \toprule
    \multirow{2}{*}{Robust Methods} & \multirow{2}{*}{Nat} & \multicolumn{3}{c}{Robust Test Acc (\%)} \\
    \cmidrule(l){3-5}
    & & Final & Best & Diff \\
    \midrule
    \multicolumn{5}{c}
    {$\varepsilon=6/255$} \\
    \midrule
    PGD-AT & \textbf{87.46 $\pm$ 0.17} & $54.33 \pm 0.12$   & $60.41 \pm 0.08$   & $6.08 \pm 0.18$   \\
    UDR-AT  & $86.59 \pm 0.04$ & $54.35 \pm 0.55$   & \textbf{60.85 $\pm$ 0.16}   & $6.51 \pm 0.41$   \\
    HR   & $87.02 \pm 0.12$ & $55.23 \pm 0.35$   & $58.67 \pm 0.12$   & $3.44 \pm 0.42$   \\
    Ours  & $86.36 \pm 0.23$ & \textbf{56.52 $\pm$ 0.54}   & 59.57 $\pm$ 0.13   & \textbf{3.06 $\pm$ 0.66}   \\

    \midrule
    \multicolumn{5}{c}
    {$\varepsilon=8/255$} \\
    \midrule
    PGD-AT &\textbf{84.80 $\pm$ 0.14} & 45.16 $\pm$ 0.19 & 52.91 $\pm$ 0.11 & 7.75 $\pm$ 0.17 \\
    UDR-AT & 83.87 $\pm$ 0.26 & 46.60 $\pm$ 0.27 & \textbf{53.23 $\pm$ 0.30} & 6.63 $\pm$ 0.57 \\
    HR & 83.95 $\pm$ 0.32 & 47.32 $\pm$ 0.59 & 51.23 $\pm$ 0.25 & 3.90 $\pm$ 0.47 \\
    Ours & 83.34 $\pm$ 0.16 & \textbf{48.58 $\pm$ 0.21} & 51.95 $\pm$ 0.19 & \textbf{3.36 $\pm$ 0.06} \\

    \midrule
    \multicolumn{5}{c}
    {$\varepsilon=10/255$} \\
    \midrule
    PGD-AT &  \textbf{82.32 $\pm$ 0.33}   & $38.51 \pm 0.06$   & $46.45 \pm 0.22$   & $7.94 \pm 0.26$   \\
    UDR-AT &  $81.49 \pm 0.54$   & $39.54 \pm 0.64$   & \textbf{47.33 $\pm$ 0.55}   & $7.79 \pm 0.27$   \\
    HR  & $80.90 \pm 0.40$   & $41.78 \pm 0.43$   & $45.70 \pm 0.02$   & $3.93 \pm 0.41$   \\
    Ours  & $80.34 \pm 0.97$   & \textbf{43.52 $\pm$ 0.59}   & $46.47 \pm 0.13$   & \textbf{3.11 $\pm$ 0.64}   \\
    \midrule
    \multicolumn{5}{c}{$\varepsilon=12/255$} \\
    \midrule
    PGD-AT  & \textbf{79.55 $\pm$ 0.06}   & $34.16 \pm 0.30$   & $41.79 \pm 0.12$   & $7.63 \pm 0.41$   \\
    UDR-AT  & $78.67 \pm 0.61$   & $35.93 \pm 1.06$   & \textbf{42.69 $\pm$ 0.10}   & $6.76 \pm 1.01$   \\
    HR &   $77.86 \pm 0.98$   & $37.69 \pm 0.61$   & $41.50 \pm 0.26$   & $3.81 \pm 0.76$   \\
    Ours  & $76.33 \pm 1.18$   & \textbf{39.45 $\pm$ 0.58}   & $42.24 \pm 0.31$   & \textbf{2.79 $\pm$ 0.53}  \\

    \bottomrule
    \end{tabular}

\end{table}

\begin{table}[htbp]
 \caption {Robust performance of different robust methods based on TRADES on CIFAR-10 using a ResNet-18 for $l_\infty$ with budget  8/255. We use PGD-10 as the attack to evaluate robust performance. The ``Best'' robust test acc is the highest robust test accuracy achieved during training whereas the ``Final'' robust test acc is last epochs's robust accuracy.}
    \label{tab: cifar10_ro_trades}
    \vspace{0.5em}
	\centering
	\begin{tabular}[c]{ccccc}
    \toprule
    \multirow{2}{*}{Robust Methods} & \multirow{2}{*}{Nat} & \multicolumn{3}{c}{Robust Test Acc (\%)} \\
    \cmidrule(l){3-5}
    & & Final & Best & Diff \\
    \midrule
		TRADES & 82.69 $\pm$ 0.32 & 51.07 $\pm$ 0.18 & 53.54 $\pm$ 0.17 & 2.47 $\pm$ 0.21\\
	    UDR-TRADES & \textbf{82.74 $\pm$ 0.59} & 51.16 $\pm$ 0.53 & \textbf{53.64 $\pm$ 0.15} & 2.48 $\pm$ 0.69\\
        Ours-TRADES & 81.79 $\pm$ 0.55 & \textbf{52.22 $\pm $ 0.42} & 53.58$\pm$ 0.19 & \textbf{1.36 $\pm$ 0.62}\\
		\bottomrule
	\end{tabular}

\end{table}

\begin{table}[H]
\caption{Robustness evaluation on CIFAR-10 using ResNet-18 for $l_\infty$ under attack budget  $8/255$, where models are trained with a larger attack budget $10/255$.}
\label{tab: large10}
\vspace{0.5em}
\centering
\begin{tabular}{ccccc}
\toprule
Methods($10/255$) & Nat & PGD-10 & PGD-200 & AA      \\
\midrule
PGD-AT   & \textbf{82.32 $\pm$ 0.33}   & $46.98 \pm 0.06$   & $45.03 \pm 0.12$   & $43.17 \pm 0.08$   \\
UDR-AT   & $81.49 \pm 0.54$   & $48.12 \pm 0.41$   & $46.45 \pm 0.63$   & $43.85 \pm 0.57$   \\
HR   & $80.90 \pm 0.40$   & $50.04 \pm 0.45$   & $48.69 \pm 0.41$   & $44.05 \pm 0.42$   \\
Ours   & $80.58 \pm 0.98$   & \textbf{50.86 $\pm$ 0.72}   & \textbf{49.71 $\pm$ 0.88}   & \textbf{45.86 $\pm$ 0.88}  \\
\bottomrule
\end{tabular}

\end{table}

\begin{table}[H]
\caption{Robustness evaluation on CIFAR-10 using ResNet-18 for $l_\infty$ under attack budget  $8/255$, where models are trained with attack budget $12/255$.}
\label{tab: large12}
\vspace{0.5em}
\centering
\begin{tabular}{ccccc}
\toprule
Methods($12/255$)  & Nat & PGD-10 & PGD-200 & AA      \\
\midrule
PGD-AT   & \textbf{79.55 $\pm$ 0.06}   & $48.99 \pm 0.38$   & $47.20 \pm 0.26$   & $44.48 \pm 0.07$   \\
UDR-AT   & $78.67 \pm 0.61$   & $50.38 \pm 0.63$   & $49.10 \pm 0.75$   & $45.47 \pm 0.72$   \\
HR   & $76.65 \pm 0.65$   & $51.96 \pm 0.05$   & $51.03 \pm 0.12$   & $45.45 \pm 0.44$   \\
Ours   & $76.33 \pm 1.17$   & \textbf{52.32 $\pm$ 0.23}   & \textbf{51.52 $\pm$ 0.26}   & \textbf{46.87 $\pm$ 0.17}   \\
\bottomrule
\end{tabular}

\end{table}

\paragraph{Mitigating overfitting on TRADES.} We would like to provide further experimental results on the CIFAR-10 dataset on TRADES and its counterpart as shown in Table \ref{tab: cifar10_ro_trades}. It can be observed that our method also can mitigates the robust overfitting, and enhances robustness while maintaining comparable accuracy.

\paragraph{Training with larger budgets.} Theorem \ref{thm: robusness} shows that to ensure robustness on the test set, it is generally necessary to employ a larger attack budget during training compared to testing. In this experiment, we examine the robustness generalization by attacking different robust methods with fixed attack budget $8/255$ with respect to larger training budgets $10/255, 12/255$ while keeping other parameters of PGD attack the same. The results shown in Table \ref{tab: large10} and Table \ref{tab: large12} demonstrate that our proposed SR-WDRO achieves the best robustness under various adversarial attacks.

\paragraph{Computation cost} We evaluate the computational efficiency of our proposed SR-WDRO method against baseline approaches. Table  presents the average training time per epoch and total training time for CIFAR-10 on a single NVIDIA A800 GPU. Our SR-WDRO incurs a modest increase in per-epoch time (approximately $12.7\%$ longer than standard adversarial training).

\begin{table}[H]
\caption{Training time per epoch and total training time for CIFAR-10 on a single NVIDIA A800 GPU.}
\label{tab: time}
\vspace{0.5em}
\centering
\begin{tabular}{ccccc}
\hline
\textbf{Time Cost (s)} & PGD-AT & UDR-AT & HR & Ours \\
\hline
200 epoch & 12614.59 & 14104.05 & 13705.56 & 14216.43 \\
per epoch & 63.07 & 70.52 & 68.53 & 71.08 \\
\hline
\end{tabular}

\end{table}
\end{document}